\documentclass{article}

\usepackage{arxiv}

\usepackage[utf8]{inputenc} 
\usepackage[T1]{fontenc}    
\usepackage{hyperref}       
\usepackage{url}            
\usepackage{booktabs}       
\usepackage{amsfonts}       
\usepackage{nicefrac}       
\usepackage{microtype}      
\usepackage{lipsum}
\usepackage{graphicx}
\usepackage{algorithm}
\usepackage{algorithmic}
\usepackage{amsmath}
\usepackage{bm}
\usepackage{graphicx}
\graphicspath{{figures/}{./}}
\usepackage{subcaption}

\usepackage{newfloat}
\usepackage{listings}
\usepackage{bm}

\floatstyle{ruled}
\newfloat{listing}{tb}{lst}{}
\floatname{listing}{Listing}

\usepackage{booktabs}
\usepackage{amssymb}

\newtheorem{theorem}{Theorem}
\newtheorem{corollary}{Corollary}

\newtheorem{proof}{Proof}

\newcommand{\R}{\mathbb{R}}

\newcommand{\E}{\mathcal{E}}

\newcommand{\norm}[1]{\left\lVert #1 \right\rVert}

\newcommand{\Phih}{\widehat{\Phi}_{\theta}}
\newcommand{\At}{\widetilde{A}}
\newcommand{\Bt}{\widetilde{B}}
\newcommand{\Ht}{\widetilde{H}}
\newcommand{\Q}{\mathcal{Q}}

\title{CoSynFlow: Conformal Symplectic Neural Flows \\ for Cross-System Prediction of \\Dissipative Hamiltonian Dynamics}

\author{
 Baige Xu \\
  RIKEN AIP\\
	744 Motooka Nishi-ku, Fukuoka, Japan\\
  \texttt{baige.xu@riken.jp} \\
   \And
 Takaharu Yaguchi \\
  IMI, Kyushu University\\
    RIKEN AIP\\
	744 Motooka Nishi-ku, Fukuoka, Japan\\
  \texttt{yaguchi@imi.kyushu-u.ac.jp} \\
}
\renewcommand{\today}{}
\begin{document}
\fancyhead[LE,RO]{}

\maketitle

\begin{abstract}
Learning solution operators for differential equations is a central
problem in scientific machine learning. However, many neural operator
methods optimize prediction accuracy without explicitly enforcing the
geometric structure of the dynamics. Structure-preserving models such
as SympNets and Symplectic Neural Flows address this issue for
conservative Hamiltonian systems by preserving the symplectic form. In
dissipative Hamiltonian systems with conformal symplectic structure,
however, the symplectic form evolves according to a conformal factor
determined by the dissipation.
We propose \textbf{CoSynFlow}, a conformal symplectic neural flow for
learning continuous-time solution maps of dissipative Hamiltonian
dynamics. CoSynFlow composes symplectic shear maps with explicit
conformal scaling, preserving the conformal symplectic structure by
construction. By conditioning it on a finite-dimensional Hamiltonian
descriptor and the dissipation parameter, a single trained model
predicts solution maps for unseen systems without retraining.
CoSynFlow keeps the structure error at machine precision, attains the
lowest long-horizon error, and admits physics-informed training.
\end{abstract}

\section{Introduction}
\label{sec:introduction}
Learning solution maps of differential equations is a central goal of
scientific machine learning. Neural operators are a natural framework
for this task, since one trained model represents solutions across
varying initial conditions, coefficients, or governing
equations~\cite{deeponet,fno,no}. However, they treat the solution map
as an unconstrained function approximation, with no guarantee that the
predicted dynamics respect the geometric structure of the underlying
equations.

Geometric structure is fundamental to Hamiltonian dynamics. The flow of
a conservative Hamiltonian system preserves the symplectic form, and
therefore also the phase-space volume. Hamiltonian Neural Networks,
SympNets, and Symplectic Neural Flows encode this geometry at the level of learned vector fields, maps, or flow architectures~\cite{hnn,sympnets,snf}.
However, existing map- and flow-based architectures are designed around ordinary symplecticity and are commonly trained for a fixed Hamiltonian system.

The setting considered here raises three challenges. First, ordinary
symplectic preservation is not the appropriate constraint for
dissipative Hamiltonian dynamics. A broad class of such systems can be
formulated as conformal Hamiltonian systems~\cite{conformal}, whose vector field $X$ and flow $\phi_t$, the solution map at time $t$, satisfy
\begin{align*}
    \mathcal{L}_{X}\omega = \gamma\omega,
    \qquad
    \phi_t^*\omega = e^{\gamma t}\omega,
\end{align*}
where $\gamma<0$ is the dissipation parameter. The symplectic form is
scaled by a prescribed conformal factor rather than preserved exactly.
Second, conventional integrators and learned fixed-step maps reach a
target time through repeated local updates, which motivates a single
model that represents the solution maps over a continuous range of
query times. Third, a structure-preserving model trained for one
Hamiltonian does not provide a shared solution model for other
Hamiltonians, and applying such models to a system family requires
separate training for each system.

We propose \textbf{CoSynFlow}, a system-conditioned conformal
symplectic neural flow for learning continuous-time solution maps of
dissipative Hamiltonian dynamics. Each time-$t$ map is a composition of
symmetric blocks built from gradient shear maps and explicit conformal
scaling. The shears are symplectic because they are generated by scalar
potentials, and the layer-wise scaling factors are normalized to
multiply to $e^{\gamma t}$, so the map is exactly conformal symplectic
for arbitrary trainable parameters. It takes the query time as a
continuous input rather than a fixed step. An operator-learning
conditioner maps a finite-dimensional Hamiltonian descriptor and the
dissipation parameter to the block parameters, so one trained model
predicts the solution map of an unseen system without retraining.
We train CoSynFlow on randomly generated Hamiltonian systems and
evaluate it on held-out systems from the training distribution and on
four analytic benchmark systems outside it, across several dissipation
rates, and beyond the training interval by composing the learned maps. We compare it with an ordinary symplectic neural flow,
unstructured and softly constrained direct-flow models, and
representative neural operator architectures, reporting predictive
accuracy, energy behavior, and the conformal symplectic structure error.

The main contributions of this work are as follows:
\begin{itemize}
    \item \textbf{Exact conformal symplecticity.}
    For any trainable parameters, the resulting time-$t$ map is exactly
    conformal symplectic by construction, matching the geometry of
    dissipative Hamiltonian dynamics.

    \item \textbf{Continuous-time flow learning.}
    The model produces the time-$t$ map for any $t$ in a single evaluation
    and composes to predict beyond the training horizon with the exact
    conformal factor. Differentiability in $t$ also admits physics-informed
    training on the residual of the governing equation, which is undefined
    for a fixed-step map.

    \item \textbf{Cross-system prediction.}
    One shared model predicts the solution map of an unseen dissipative
    Hamiltonian system without retraining, and conditioning leaves the exact
    structure intact.
\end{itemize}

\section{Related Work}

\paragraph{Neural Operators for Solution-Map Learning.}
Deep Operator Networks (DeepONets), Fourier Neural Operators (FNOs),
Physics-Informed Neural Operators (PINOs) and Transolver learn maps
between function spaces and predict across varying input functions and
system parameters~\cite{deeponet,fno,pino,transolver}. None of them
imposes Hamiltonian geometry on the learned operator by construction.
CoSynFlow instead uses operator learning to condition a finite-time flow whose geometry is fixed by the architecture.

\paragraph{Structure-Preserving Learning for Hamiltonian Dynamics.}
Hamiltonian Neural Networks (HNNs) and Lagrangian Neural Networks
(LNNs) learn scalar Hamiltonian or Lagrangian functions and derive the
dynamics from them, while Symplectic ODE-Net (SymODEN) incorporates
control into a Hamiltonian vector-field model~\cite{hnn,lnn,symoden}.
SympNets parameterize exactly symplectic evolution maps, and Symplectic
Neural Flows (SympFlow) construct time-dependent symplectic flow
maps~\cite{sympnets,snf}. Poisson Neural Networks and Neural Symplectic
Form extend structure-preserving learning to Poisson systems and
noncanonical coordinates~\cite{pnn,nsf}. These methods address
conservative dynamics. CoSynFlow instead targets dissipative systems,
whose flows scale rather than preserve the symplectic form.

\paragraph{Learning Dissipative Hamiltonian Dynamics.}
Dissipative Hamiltonian Neural Networks (D-HNNs) separate Hamiltonian
and Rayleigh-dissipative components, while Dissipative SymODEN and
port-Hamiltonian neural networks encode dissipation and external inputs
in structured continuous-time dynamics~\cite{dhnn,dsymoden,phnn}.
Discrete-gradient models and energy-behavior-preserving integrators
enforce conservation or dissipation laws in discrete time, and
GENERIC-informed networks encode thermodynamic
consistency~\cite{dgnet,uepi,gfinns}. These approaches identify unknown
dynamics from trajectory data, whereas CoSynFlow learns a shared family
of solution maps for systems that are specified, and predicts unseen
systems without retraining.

\section{Preliminaries and Problem Setting}
\label{sec:preliminaries}

\subsection{Conformal Symplectic Structure}
\label{sec:conformal_symplectic_structure}

Let $(\mathcal{M},\omega)$ be an exact symplectic manifold, so that
$\omega=-\mathrm{d}\theta$ for a one-form $\theta$. For a sufficiently
smooth Hamiltonian $H:\mathcal{M}\rightarrow\mathbb{R}$, the Hamiltonian
vector field $X_H$ is defined by $\iota_{X_H}\omega=\mathrm{d}H$, and
Cartan's formula together with $\mathrm{d}\omega=0$ gives
$\mathcal{L}_{X_H}\omega=0$. The Liouville vector field $Z$ is defined
by $\iota_{Z}\omega=-\theta$ and satisfies
$\mathcal{L}_{Z}\omega=\omega$ for the same reason. We consider vector
fields of the form
\begin{align}
    X &= X_H+\gamma Z, \qquad \gamma<0,
    \label{eq:conformal_vector_decomposition}
\end{align}
which by linearity of the Lie derivative satisfy
$\mathcal{L}_{X}\omega=\gamma\omega$ and are called conformal
symplectic. Let $\phi_t$ denote the flow of $X$, so that
$z(t)=\phi_t(z_0)$ with $z(0)=z_0$. Since
$\frac{\mathrm{d}}{\mathrm{d}t}\phi_t^{*}\omega
=\phi_t^{*}\mathcal{L}_{X}\omega=\gamma\,\phi_t^{*}\omega$ and
$\phi_0^{*}\omega=\omega$,
\begin{align}
    \phi_t^{*}\omega &= e^{\gamma t}\omega .
    \label{eq:true_flow_conformal_symplectic}
\end{align}
For $\gamma<0$ the symplectic form contracts by the prescribed factor
$e^{\gamma t}$. This is the geometric relation imposed on the learned
maps.

Throughout this work we take $\mathcal{M}=\mathbb{R}^{2d}$ with its
canonical structure, writing $z=(q,p)$ with $q,p\in\mathbb{R}^{d}$ the
generalized positions and momenta, and
$J=\left(\begin{smallmatrix}0&I_d\\-I_d&0\end{smallmatrix}\right)$. Then
$\theta=\sum_{i=1}^{d}p_i\,\mathrm{d}q_i$,
$\omega=\sum_{i=1}^{d}\mathrm{d}q_i\wedge\mathrm{d}p_i$, $X_H=J\nabla H$
and $Z(q,p)=(0,p)$, and \eqref{eq:true_flow_conformal_symplectic} reads
$D\phi_t(z)^{\top}J\,D\phi_t(z)=e^{\gamma t}J$.

\subsection{Dissipative Hamiltonian Systems}
In canonical coordinates, \eqref{eq:conformal_vector_decomposition} takes
the form
\begin{align}
    \dot z &= J\nabla H(z) + \gamma
    \begin{pmatrix} 0 & 0\\ 0 & I_d\end{pmatrix} z,
    \label{eq:dissipative_hamiltonian}
\end{align}
or equivalently $\dot q=\nabla_pH(q,p)$ and
$\dot p=-\nabla_qH(q,p)+\gamma p$. Along solutions
$\mathrm{d}H/\mathrm{d}t=\gamma\,p^\top\nabla_pH(q,p)$, so the energy is
no longer conserved and its instantaneous rate of change is proportional
to $\gamma$. The dissipation acts on the momentum variables alone, and
$\gamma=0$ recovers the conservative case. The solution flow of
\eqref{eq:dissipative_hamiltonian} is $\phi_t$ and satisfies
\eqref{eq:true_flow_conformal_symplectic}.

\subsection{Cross-System Solution-Map Learning}
\label{sec:cross_system_problem}

Let $\mathcal{H}$ be a family of sufficiently smooth Hamiltonians on
$\mathbb{R}^{2d}$ and let $\Gamma\subset(-\infty,0)$ be a set of
dissipation parameters. Each pair $(H,\gamma)\in\mathcal{H}\times\Gamma$
defines a system of the form \eqref{eq:dissipative_hamiltonian}, whose
solution flow $\phi_{H,\gamma}^{t}$ satisfies
$(\phi_{H,\gamma}^{t})^*\omega=e^{\gamma t}\omega$.

In the cross-system setting the Hamiltonian is not fixed, and we assume
that $H$ and $\gamma$ are known for each system. To condition a shared
model on the Hamiltonian, let $r_1,\ldots,r_m\in\mathbb{R}^{2d}$ be
fixed sensor points and define the finite-dimensional descriptor
$h_H=\mathcal{E}(H):=(H(r_1),\ldots,H(r_m))\in\mathbb{R}^{m}$ and the
system condition $\xi:=(h_H,\gamma)$. The known Hamiltonian enters the
structure-preserving construction directly, while $h_H$ conditions its
shared trainable components.

The objective is to learn one model for the entire family. Given an
initial state $z_0$, a query time $t\in[0,T]$, a known Hamiltonian $H$
and the system condition $\xi$, the model approximates
\begin{align*}
    \widehat{\Phi}_{\theta}^{t}(z_0;H,\xi)
    \approx \phi_{H,\gamma}^{t}(z_0),
\end{align*}
so that a single set of trainable parameters represents finite-time
solution maps for different Hamiltonians and dissipation parameters,
rather than a separate model for each system. In addition to predictive
accuracy, every conditioned map is required to satisfy
$(\widehat{\Phi}_{\theta}^{t}(\,\cdot\,;H,\xi))^*\omega=e^{\gamma t}\omega$.

\begin{figure*}[t]
    \centering
    \includegraphics[width=0.8\textwidth]{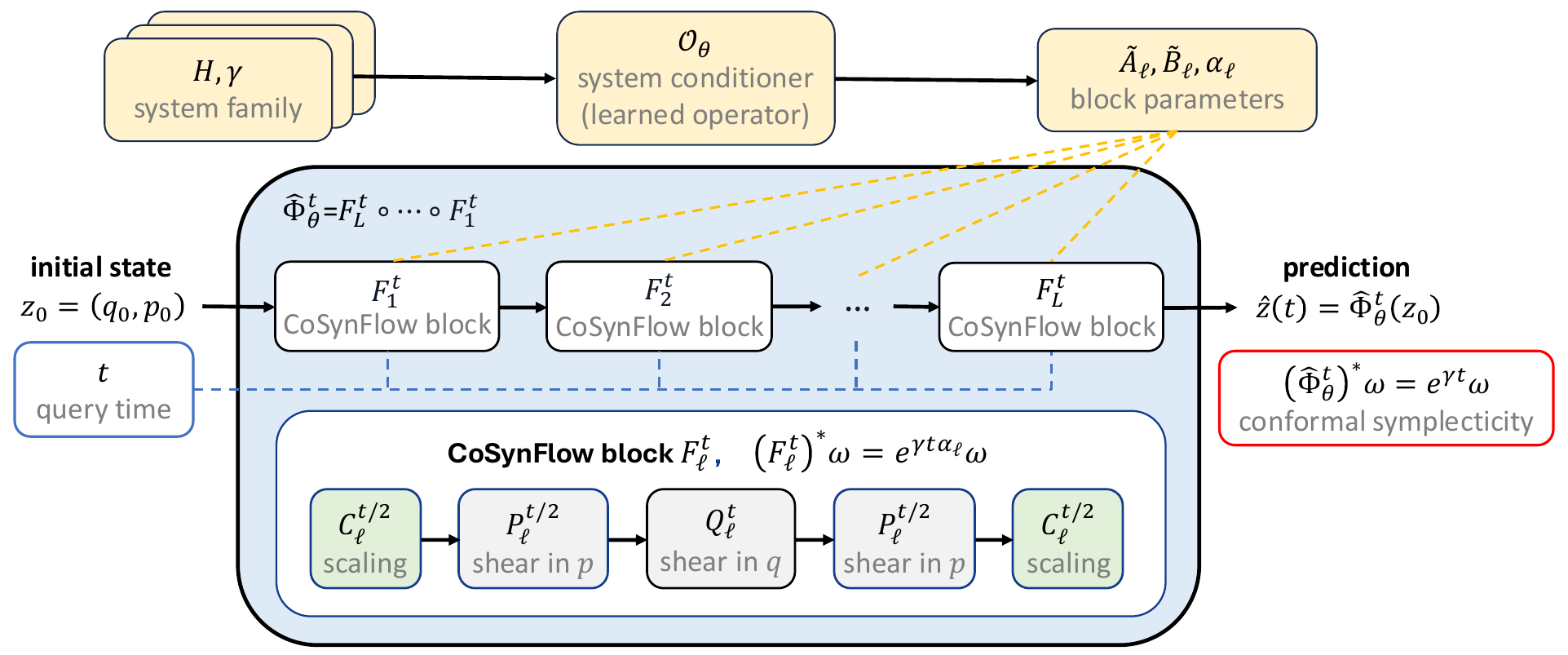}
    \caption{
Overview of CoSynFlow. The system conditioner $\mathcal{O}_{\theta}$
maps a system to the parameters of every flow block. The query time is
read by every block, so one trained model is evaluated at any $t$
instead of at a fixed step. The shears preserve $\omega$ and only the
scalings change it, so $\sum_{\ell}\alpha_\ell=1$ makes the map
conformal symplectic for arbitrary trainable parameters.
}
    \label{fig:cosynflow_architecture}
\end{figure*}

\section{Methodology: CoSynFlow}
\label{sec:methodology}

We first define the flow blocks for a fixed system and then condition
their trainable components on the system specification.
Figure~\ref{fig:cosynflow_architecture} summarizes the resulting model.

\subsection{Conformal Symplectic Flow Blocks}
\label{sec:flow_blocks}

For a fixed Hamiltonian $H$ and dissipation parameter $\gamma$, we
approximate the solution flow $\phi_t$ by a neural flow
$\widehat{\Phi}_{\theta}^{t}$ satisfying
\eqref{eq:true_flow_conformal_symplectic}. For each layer
$\ell=1,\ldots,L$, let $\widetilde{A}_{\ell}$ and
$\widetilde{B}_{\ell}$ be scalar potentials and define
\begin{align}
    P_{\ell}^{t/2}(q,p)
    &= \left( q,\; p-\tfrac{t}{2}\nabla_q\widetilde{A}_{\ell}(q;t) \right),
    \label{eq:p_shear}\\
    Q_{\ell}^{t}(q,p)
    &= \left( q+t\nabla_p\widetilde{B}_{\ell}(p;t),\; p \right),
    \label{eq:q_shear}\\
    C_{\ell}^{t/2}(q,p)
    &= \left( q,\; e^{\gamma t\alpha_\ell/2}p \right),
    \label{eq:conformal_scaling}
\end{align}
where $\sum_{\ell=1}^{L}\alpha_\ell=1$. The shears $P_{\ell}^{t/2}$ and
$Q_{\ell}^{t}$ are symplectic because they are generated by scalar
potentials, while
$(C_{\ell}^{t/2})^*\omega=e^{\gamma t\alpha_\ell/2}\omega$. We use the
symmetric block
\begin{align}
    F_{\ell}^{t}
    &= C_{\ell}^{t/2} \circ P_{\ell}^{t/2} \circ Q_{\ell}^{t}
       \circ P_{\ell}^{t/2} \circ C_{\ell}^{t/2},
    \label{eq:cosynflow_block}
\end{align}
which satisfies $(F_{\ell}^{t})^*\omega=e^{\gamma t\alpha_\ell}\omega$,
and the complete map
$\widehat{\Phi}_{\theta}^{t}=F_{L}^{t}\circ\cdots\circ F_{1}^{t}$
therefore satisfies
$(\widehat{\Phi}_{\theta}^{t})^*\omega=e^{\gamma t}\omega$. The explicit
factors of $t$ also give $\widehat{\Phi}_{\theta}^{0}=\mathrm{id}$ and
allow the model to be evaluated at continuously varying query times.

\subsection{System-Conditioned CoSynFlow}
\label{sec:system_conditioned_cosynflow}

The blocks above are defined for a single system. To share one model
across systems, we learn an operator that maps a system to every
system-dependent component of its blocks,
\begin{align*}
    \mathcal{O}_{\theta}:\
    (H,\gamma)
    \longmapsto
    \big(
        \{\alpha_\ell\}_{\ell=1}^{L},\;
        \{\widetilde{A}_{\ell},\widetilde{B}_{\ell}\}_{\ell=1}^{L}
    \big).
\end{align*}
Its trainable components read the system only through the descriptor
$h_H$ and the dissipation parameter, so they are conditioned on
$\xi=(h_H,\gamma)$. How this information reaches the potential networks
is not essential to the construction. Supplying it as an additional
input and using it to generate the network parameters both yield
potentials that vary with the system. The conditioned flow is
\begin{align}
    \widehat{\Phi}_{\theta}^{t}(\,\cdot\,;H,\xi)
    = F_{L,H,\xi}^{t}\circ\cdots\circ F_{1,H,\xi}^{t},
    \label{eq:system_conditioned_flow}
\end{align}
where $H$ enters through the block potentials and $\xi$ through the
trainable components. The condition specifies the system and the query
time selects which map of the flow is evaluated, so
$\mathcal{O}_{\theta}$ is evaluated once per system and reused across
query times.

For every fixed $(H,\xi,t)$, each shear potential still depends only on
$q$ or only on $p$, the scaling allocations remain independent of the
phase-space state, and $\sum_{\ell}\alpha_\ell(\xi)=1$ holds for every
$\xi$. The preceding geometric argument therefore applies to every
conditioned system.

\subsection{Model Instantiation}
\label{sec:model_instantiation}

Any component of the Hamiltonian that depends on the positions alone or
on the momenta alone can be evaluated directly in the corresponding
shear. We denote such components by $H_q$ and $H_p$, while the
descriptor enters only through $\mathcal{O}_{\theta}$. The scalar block
potentials are
\begin{align}
    \widetilde{A}_{\ell}(q;t,\xi)
    &= a_\ell(\xi)H_q(q) + \delta A_\ell(q;t,\xi),
    \label{eq:pot_A}\\
    \widetilde{B}_{\ell}(p;t,\xi)
    &= b_\ell(\xi)H_p(p) + \delta B_\ell(p;t,\xi),
    \label{eq:pot_B}
\end{align}
where $a_\ell$ and $b_\ell$ are scalar coefficients and $\delta A_\ell$
and $\delta B_\ell$ are trainable scalar potentials, one pair per block.
The first terms supply the dynamics that is known in closed form and the
trainable terms enrich the finite-depth flow. Since both enter only as
scalar potentials, the shears remain gradient maps, and their gradients
are taken by automatic differentiation.

A Fourier neural operator~\cite{fno} encodes $(h_H,\gamma)$ into a
system representation. A linear head maps it to $a_\ell$, $b_\ell$ and
the logits whose softmax gives $\alpha_\ell$, which enforces
$\sum_{\ell}\alpha_\ell(\xi)=1$. The same representation conditions
$\delta A_\ell$ and $\delta B_\ell$ through FiLM layers. Each network takes its own phase-space variable and features of the query time as input. Thus, although $\gamma$ also conditions the learned
components, the total conformal factor remains exactly $e^{\gamma t}$ by
construction.

The trainable potentials are initialized to zero and the coefficients to
$a_\ell=b_\ell=\alpha_\ell=1/L$, so that at initialization CoSynFlow is
exactly the classical conformal Strang splitting of $H_q+H_p$ with step
$t/L$. Training then adapts the coefficients and the potentials while
retaining exact conformal symplecticity.

\section{Theoretical Guarantees}
%
%


We give two guarantees for this construction. Theorem~\ref{thm:exact} states that the learned map is exactly conformal symplectic for every value of the trainable parameters. Theorem~\ref{thm:uat} states that this exact constraint does not limit the approximation capability of the flow. Full proofs are given in the appendix.

\begin{theorem}[Exact conformal symplecticity]
\label{thm:exact}
Fix a Hamiltonian $H$ and a system condition $\xi=(h_H,\gamma)$.
Suppose that for every $t\in[0,T]$ and every $\ell=1,\ldots,L$ the
scalar potentials $\widetilde{A}_{\ell}(\cdot\,;t,\xi)$ and
$\widetilde{B}_{\ell}(\cdot\,;t,\xi)$ are $C^{2}$ on $\mathbb{R}^{d}$,
and that the allocations $\alpha_\ell(\xi)$ are independent of the
phase-space variable and satisfy $\sum_{\ell=1}^{L}\alpha_\ell(\xi)=1$.
Let $\widehat{\Phi}_{\theta}^{t}(\cdot\,;H,\xi)$ be the conditioned
composition~\eqref{eq:system_conditioned_flow} of the symmetric
blocks~\eqref{eq:cosynflow_block}. Then, for arbitrary trainable
parameters and every $t\in[0,T]$, the map
$\widehat{\Phi}_{\theta}^{t}(\cdot\,;H,\xi)$ is a $C^{1}$
diffeomorphism of $\mathbb{R}^{2d}$ with an explicit inverse and
satisfies
\begin{align}
    \left(\widehat{\Phi}_{\theta}^{t}(\cdot\,;H,\xi)\right)^{*}\omega
    &= e^{\gamma t}\omega ,
    \label{eq:thm_exact}
\end{align}
equivalently
$D\widehat{\Phi}_{\theta}^{t}(z)^{\top}J\,
D\widehat{\Phi}_{\theta}^{t}(z)=e^{\gamma t}J$ for all
$z\in\mathbb{R}^{2d}$. Moreover
$\widehat{\Phi}_{\theta}^{0}(\cdot\,;H,\xi)=\mathrm{id}$ and
$\det D\widehat{\Phi}_{\theta}^{t}(z;H,\xi)=e^{d\gamma t}$.
\end{theorem}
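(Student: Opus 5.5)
The plan is to verify everything at the level of the elementary maps $P_\ell^{t/2}$, $Q_\ell^{t}$, $C_\ell^{t/2}$ for fixed $(H,\xi,t)$, and then propagate through the compositions in \eqref{eq:cosynflow_block} and \eqref{eq:system_conditioned_flow}. The tool throughout is the chain rule together with the matrix form of the pullback: for a $C^1$ map $G$ we have $G^*\omega=c\,\omega$ exactly when $DG(z)^\top J\,DG(z)=cJ$ for all $z$. This identity is multiplicative under composition: if $DG^\top J DG=c_G J$ and $DK^\top J DK=c_K J$, then
\begin{align*}
D(G\circ K)^\top J\, D(G\circ K)=DK^\top\left(DG^\top J\,DG\right)DK=c_G c_K J .
\end{align*}

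\textbf{Step 1: the elementary maps.} Since $\widetilde{A}_\ell(\cdot\,;t,\xi)\in C^2$, $P_\ell^{t/2}$ is $C^1$ with Jacobian $\left(\begin{smallmatrix} I & 0\\ -\tfrac t2 S & I\end{smallmatrix}\right)$, where $S=\nabla_q^2\widetilde{A}_\ell$ is symmetric by Schwarz's theorem. A direct block multiplication gives $DP^\top J\,DP=J$; the off-diagonal terms cancel precisely because $S=S^\top$. The same argument applies to $Q_\ell^t$, whose Jacobian is $\left(\begin{smallmatrix} I & tT\\ 0 & I\end{smallmatrix}\right)$ with $T=\nabla_p^2\widetilde{B}_\ell$. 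For the scaling, $DC_\ell^{t/2}=\mathrm{diag}(I,\lambda I)$ with $\lambda=e^{\gamma t\alpha_\ell/2}$, and we get $DC^\top J\,DC=\lambda J$. This uses that $\alpha_\ell$ does not depend on $z$. Each map also has an explicit $C^1$ inverse: $(q,p)\mapsto(q,p+\tfrac t2\nabla_q\widetilde{A}_\ell(q;t))$, $(q,p)\mapsto(q-t\nabla_p\widetilde{B}_\ell(p;t),p)$, and $(q,p)\mapsto(q,\lambda^{-1}p)$. Hence each is a $C^1$ diffeomorphism.

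\textbf{Step 2: compose.} Applying the multiplicative identity to the five factors of $F_\ell^t$ gives $(F_\ell^t)^*\omega=\lambda^2\omega=e^{\gamma t\alpha_\ell}\omega$. Composing the $L$ blocks gives the factor $\prod_\ell e^{\gamma t\alpha_\ell}=e^{\gamma t\sum_\ell\alpha_\ell}=e^{\gamma t}$, using $\sum_\ell\alpha_\ell(\xi)=1$. This proves \eqref{eq:thm_exact} in its matrix form. Compositions of $C^1$ diffeomorphisms are $C^1$ diffeomorphisms, and the explicit inverse of $\widehat{\Phi}^t_\theta$ is the reversed composition of the elementary inverses from Step 1. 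The conditioning plays no role, because for fixed $(H,\xi,t)$ every potential depends only on $q$ or only on $p$.

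\textbf{Step 3: remaining claims.} At $t=0$ every shear displacement carries an explicit factor $t$, and every scaling factor equals $e^0=1$, so every elementary map is the identity and $\widehat{\Phi}^0_\theta=\mathrm{id}$. This needs only that the potential gradients are finite at $t=0$. For the determinant, take determinants in $M^\top JM=e^{\gamma t}J$ to get $(\det M)^2=e^{2d\gamma t}$. Rather than fixing the sign by a continuity argument, it is cleaner to compute directly: the shears have unit-triangular Jacobians with determinant $1$, and each $C_\ell^{t/2}$ has determinant $\lambda^d$. Therefore $\det D\widehat{\Phi}^t_\theta=\prod_\ell e^{d\gamma t\alpha_\ell}=e^{d\gamma t}$.

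The main obstacle is not conceptual. It is being careful that the $C^2$ hypothesis is exactly what gives both $C^1$ regularity of the maps and symmetry of the Hessians, and that $\alpha_\ell$ and the shear structure stay state-independent under conditioning. Both points are covered by the hypotheses as stated.
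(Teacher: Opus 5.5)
Your proposal is correct and follows essentially the same route as the paper's proof. The shared steps are: the matrix form of the pullback and its multiplicativity under the chain rule; symmetric Hessians making each gradient shear symplectic with unit determinant; a factor $\lambda_\ell=e^{\gamma t\alpha_\ell/2}$ with determinant $\lambda_\ell^{d}$ per half-scaling, so that $\sum_\ell\alpha_\ell=1$ yields $e^{\gamma t}$ and $e^{d\gamma t}$; explicit factor-wise inverses; and the explicit factors of $t$ giving the identity at $t=0$. The only difference is cosmetic: the paper first absorbs the prefactors into $A_\ell=\tfrac{t}{2}\widetilde{A}_\ell$ and $B_\ell=t\,\widetilde{B}_\ell$. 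You should, however, rename your Hessian $T=\nabla_p^2\widetilde{B}_\ell$, since it clashes with the horizon $T$.
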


The smoothness assumption holds for the model instantiation, since the
Hamiltonian components are smooth and the trainable potentials use
smooth activations. The system representation is a constant when
differentiating with respect to the phase-space variables, so each
shear remains the gradient of a scalar potential.

\begin{corollary}[Exact structure under composition]
\label{cor:composition}
Under the assumptions of Theorem~\ref{thm:exact}, for any query times
$t_1,\ldots,t_k\in[0,T]$,
\begin{align*}
    \left(\widehat{\Phi}_{\theta}^{t_k}\circ\cdots\circ
    \widehat{\Phi}_{\theta}^{t_1}\right)^{*}\omega
    = e^{\gamma(t_1+\cdots+t_k)}\omega ,
\end{align*}
where all maps are conditioned on the same $(H,\xi)$. The relation
therefore remains exact under any finite number of compositions, even
when $t_1+\cdots+t_k$ exceeds $T$ and trajectory errors accumulate.
\end{corollary}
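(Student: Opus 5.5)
The plan is to deduce Corollary~\ref{cor:composition} from Theorem~\ref{thm:exact} using only the functoriality of pullback, $(f\circ g)^{*}=g^{*}\circ f^{*}$, and the fact that pullback commutes with multiplication by constants. I will argue by induction on $k$. Set $\Psi_j=\widehat{\Phi}_{\theta}^{t_j}\circ\cdots\circ\widehat{\Phi}_{\theta}^{t_1}$, with every factor conditioned on the same $(H,\xi)$.

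The base case $k=1$ is exactly \eqref{eq:thm_exact}, applied at $t=t_1\in[0,T]$. For the inductive step, write $\Psi_{j+1}=\widehat{\Phi}_{\theta}^{t_{j+1}}\circ\Psi_j$. Then
\begin{align*}
\Psi_{j+1}^{*}\omega
=\Psi_j^{*}\big((\widehat{\Phi}_{\theta}^{t_{j+1}})^{*}\omega\big)
=\Psi_j^{*}\big(e^{\gamma t_{j+1}}\omega\big)
=e^{\gamma t_{j+1}}\Psi_j^{*}\omega
=e^{\gamma(t_1+\cdots+t_{j+1})}\omega .
\end{align*}
The second equality uses Theorem~\ref{thm:exact}, and the last uses the induction hypothesis. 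Pulling the scalar out is legitimate because $e^{\gamma t_{j+1}}$ is a constant on phase space: it depends only on $\gamma$ and $t_{j+1}$, and this is precisely why the theorem requires $\alpha_\ell$ to be independent of $z$.

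As a cross-check in matrix form, the chain rule gives $D\Psi_{j+1}(z)=D\widehat{\Phi}_{\theta}^{t_{j+1}}(\Psi_j(z))\,D\Psi_j(z)$. Substituting into $D\Psi_{j+1}^{\top}JD\Psi_{j+1}$ and applying the theorem's identity at the point $\Psi_j(z)$ gives the same result. The $C^{1}$ diffeomorphism property from the theorem guarantees that these compositions and Jacobians are well defined on all of $\mathbb{R}^{2d}$.

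I expect no step to be a real obstacle. The one point to note is that only each individual $t_j$ must lie in $[0,T]$, where the theorem applies; the sum $t_1+\cdots+t_k$ is unrestricted, which gives the remark about exceeding $T$. Trajectory errors play no role, because the identity holds pointwise at every $z$ regardless of how accurate the map is.
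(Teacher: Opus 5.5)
Your proposal is correct and is essentially the paper's proof: the paper also applies Theorem~\ref{thm:exact} at each $t_i$ and then uses the chain-rule identity \eqref{eq:app_mult}, which says conformal factors multiply under composition. That identity is exactly your matrix cross-check, and your induction on $k$ simply makes the paper's phrase ``applying it repeatedly'' explicit.
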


\noindent\emph{Proof sketch.}
Each gradient shear has a block-triangular Jacobian whose off-diagonal
block is a symmetric Hessian, so every shear is symplectic, and the two
half-scalings in block $\ell$ contribute the factor
$e^{\gamma t\alpha_\ell(\xi)}$. Conformal factors multiply under
composition, so $\sum_{\ell}\alpha_\ell(\xi)=1$ gives the total factor
$e^{\gamma t}$. Every factor is invertible by reversing its own update,
every factor is the identity at $t=0$ because of the explicit factors of
$t$, and the shears have determinant one while the scalings give
$e^{d\gamma t}$. Details are in the appendix.

We next show that this exact constraint costs no approximation
capability. The result is stated for a single system, and $\xi$ is
suppressed from the notation.

\begin{theorem}[Uniform approximation]
\label{thm:uat}
Let $K\subset\mathbb{R}^{2d}$ be compact, let $T>0$ and $\gamma<0$,
let $H\in C^{2}(\mathbb{R}^{2d})$, and let $\varphi^{t}$ be the flow
of the conformal Hamiltonian system
\eqref{eq:dissipative_hamiltonian}. Assume that there exists a
compact set $K'\subset\mathbb{R}^{2d}$ such that
\begin{align*}
    \varphi^{s}(K)
    \subset
    \operatorname{int}K'
\end{align*}
for every $s\in[0,T]$. Then, for every $\varepsilon>0$, there exist a
depth $L$ and parameters $\theta_L$, independent of $t$, such that
\begin{align*}
    \sup_{z\in K}
    \sup_{t\in[0,T]}
    \left\|
        \widehat{\Phi}_{\theta_L}^{t}(z)
        -
        \varphi^{t}(z)
    \right\|
    <
    \varepsilon.
\end{align*}
Moreover, for every $t\in[0,T]$,
\begin{align*}
    \left(
        \widehat{\Phi}_{\theta_L}^{t}
    \right)^*\omega
    =
    e^{\gamma t}\omega.
\end{align*}
\end{theorem}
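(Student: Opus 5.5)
The plan is to exhibit a concrete parameter choice under which $\widehat{\Phi}_{\theta_L}^{t}$ is exactly a classical conformal splitting integrator with $L$ substeps of size $t/L$. Convergence of that integrator, uniformly in $t\in[0,T]$ and $z\in K$, then gives the approximation bound. The structural identity $(\widehat{\Phi}_{\theta_L}^{t})^*\omega=e^{\gamma t}\omega$ needs no new argument: it is Theorem~\ref{thm:exact} applied to these parameters.

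\textbf{Choice of parameters.} I would first reduce to a Hamiltonian whose flow the shears can reproduce. The blocks use potentials depending only on $q$ or only on $p$, so a general $H$ cannot be split exactly. I therefore work with the additive ansatz, using the trainable terms $\delta A_\ell,\delta B_\ell$ as generic smooth functions; these are $C^2$ and universal as scalar networks with smooth activations.

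Take $\alpha_\ell=1/L$. If $H=H_q(q)+H_p(p)$, set $a_\ell=b_\ell=1/L$ and $\delta A_\ell=\delta B_\ell=0$. Each block $F_\ell^t$ is then the conformal Strang step $C^{h/2}\circ P^{h/2}\circ Q^{h}\circ P^{h/2}\circ C^{h/2}$ with $h=t/L$. This is a consistent one-step method for \eqref{eq:dissipative_hamiltonian}, because $C^{s}$ is the exact flow of $\dot p=\gamma p$ and the shears are exact flows of the separable pieces. The parameters do not depend on $t$, as the statement requires.

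For non-separable $H$, I would instead let the learned potentials encode a sequence of exactly solvable pieces. The cleanest route is to invoke the known fact that compositions of gradient shears are dense among symplectic maps; this is the SympNet universality result. However, requiring $\theta_L$ to be independent of $t$ makes this awkward. I would therefore state the separable case carefully and, if needed, handle the general case by approximating the local flow by symplectic shears whose potentials scale linearly in the step, through the dependence on $t$ permitted in $\widetilde A_\ell(\cdot;t)$.

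\textbf{Convergence.} Let $\Psi_h$ denote one block. I would show the local error $\|\Psi_h(z)-\varphi^h(z)\|\le Ch^2$ on $K'$. This follows from a first-order Taylor expansion using $H\in C^2$: the vector field is $C^1$, so the integrator agrees with the flow to first order. Next comes a Lipschitz stability bound $\|\Psi_h(z)-\Psi_h(w)\|\le(1+Mh)\|z-w\|$ on $K'$. A standard Lady Windermere's fan argument then gives a global error $\le C'T e^{MT}/L$, uniformly in $t\in[0,T]$, since every substep has size $t/L\le T/L$.

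The hypothesis $\varphi^s(K)\subset\operatorname{int}K'$ is used here. Let $\delta$ be the distance from $\bigcup_s\varphi^s(K)$ to $\partial K'$. For $L$ large, the global error stays below $\delta/2$, so an induction shows every numerical iterate stays in a compact $\delta/2$-neighbourhood where the constants $C,M$ are valid. Choosing $L$ with $C'Te^{MT}/L<\varepsilon$ finishes the proof.

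\textbf{Main obstacle.} The main obstacle is the non-separable case: producing $t$-independent parameters whose $q$-only and $p$-only shears converge to the flow of a general $C^2$ Hamiltonian. My first attempt would be to approximate $H$ on $K'$, uniformly in $C^1$, by a finite sum of separable-after-linear-change terms. The flow of each term is a conjugated shear, and these would be composed via a Lie–Trotter argument. If linear conjugations are unavailable in the architecture, I would fall back on the $t$-dependence of the trainable potentials to realize them.
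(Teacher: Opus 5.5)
Your argument for separable $H=H_q(q)+H_p(p)$ is sound. With $\alpha_\ell=a_\ell=b_\ell=1/L$ and $\delta A_\ell=\delta B_\ell=0$, the network is the conformal Strang splitting with step $t/L$, which is exactly its initialization. Consistency, stability and the containment hypothesis then give an $O(1/L)$ error uniformly in $t\in[0,T]$, with $t$-independent parameters. The theorem, however, is stated for every $H\in C^{2}(\mathbb{R}^{2d})$. For non-separable $H$ you give no proof, only candidate strategies that you yourself flag as the main obstacle. That case is the substantive part of the statement, so the proof is incomplete.

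What is missing is an approximation result that is uniform in $t$: one fixed set of $q$-only and $p$-only potentials whose shears follow a non-separable flow for all $t\in[0,T]$ at once. Every shear carries an explicit factor $t$, and the networks have $t$-independent parameters. So the admissible potentials are $t$ times functions continuous on $[0,T]$, and every factor of the network collapses to the identity as $t\to0$. This is what blocks each of your sketches.

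\begin{itemize}
\item SympNet universality approximates a single map, so its parameters depend on $t$, as you note.
\item In the linear-conjugation route, a fixed symplectic conjugation $M\neq I$ cannot be produced near $t=0$, since it would need potentials of size $1/t$. Invoking the $t$-dependence of $\widetilde{A}_\ell$ does not remove this without a further device, for instance a cutoff $\chi(t)$ on the conjugating shears together with a separate $O(t_0)$ bound for $t\le t_0$. You do not provide one.
\item You would also have to interleave the damping with a non-separable splitting.
\item You would have to handle the fact that both kicks inside a block share the same potential $\widetilde{A}_\ell$.
\end{itemize}

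The paper supplies these pieces in four steps.
\begin{enumerate}
\item It substitutes $y=(q,e^{-\gamma t}p)$, which gives $\varphi^t=C_{e^{\gamma t}}\circ\psi^t$, where $\psi^t$ is the symplectic flow of $\widetilde{H}(t,y)=e^{-\gamma t}H(y_q,e^{\gamma t}y_p)$.
\item It approximates $\psi^t$ uniformly on $K\times[0,T]$ by the symplectic-neural-flow construction (Weierstrass approximation, Turaev's lemma, Lie--Trotter splitting). This yields alternating shears with potentials $\widetilde{V}(t,\cdot)-\widetilde{V}(0,\cdot)$, where $\widetilde{V}$ is of class $C^1$ in $t$.
\item It uses $P_A\circ C_\lambda=C_\lambda\circ P_{A/\lambda}$ and $Q_B\circ C_\lambda=C_\lambda\circ Q_{B^\lambda}$, with $B^\lambda(p)=\lambda^{-1}B(\lambda p)$. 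These bring the network to the normal form $C_{e^{\gamma t}}\circ P\circ Q\circ\cdots\circ Q\circ P$, whose potentials are matched to those shears by a triangular recursion across blocks.
\item It observes that dividing $\widetilde{V}(t,\cdot)-\widetilde{V}(0,\cdot)$ by $t$ leaves functions continuous at $t=0$, which the trainable potentials can approximate.
\end{enumerate}
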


\noindent\emph{Proof sketch.}
The change of variables $y=(q,e^{-\gamma t}p)$ turns the conformal flow
into the symplectic flow of the time-dependent Hamiltonian
$\widetilde{H}(t,y)=e^{-\gamma t}H(y_q,e^{\gamma t}y_p)$, which is again
$C^{2}$ in $y$. On a compact set $\widetilde{H}$ is approximated by a
Hamiltonian that is polynomial in the phase-space variable, and the
resulting flow is split into substeps, each of which is approximated by
the flow of a Hamiltonian whose two parts depend on the positions and on
the momenta separately~\cite{turaev}. Such flows are realized by the
gradient shears, whose potentials take the query time as an input, and
the scalings are moved between the blocks using
$C_\lambda\circ P_A=P_{\lambda A}\circ C_\lambda$ and
$C_\lambda\circ Q_B=Q_{B_\lambda}\circ C_\lambda$ with
$B_\lambda(p)=\lambda B(p/\lambda)$. A single parameter vector therefore
covers the whole interval, and Theorem~\ref{thm:exact} supplies the
exact conformal factor. The argument adapts the universality result for
symplectic neural flows~\cite{snf} to the conformal setting, the
additional ingredients being the change of variables and the exact
reinsertion of the scalings into the block structure. Details are in the
appendix.


\section{Numerical Experiments}
\label{sec:experiments}

\subsection{Experimental Setup}
\label{sec:experimental_setup}

\paragraph{Data.}
We instantiate the construction on two-degree-of-freedom natural
Hamiltonians $H(q,p)=\tfrac12\lVert p\rVert^{2}+V(q)$ on the position
domain $\mathcal{Q}=[-3,3]^{2}$, so that $H_q=V$ and
$H_p=\tfrac12\lVert p\rVert^{2}$ in \eqref{eq:pot_A} and
\eqref{eq:pot_B}. Each potential is a random quartic backbone plus a
Gaussian process perturbation of amplitude $\delta\in[0.2,1.5]$, which
gives single well, double well and multiple well landscapes, and the
dissipation parameter is drawn log-uniformly with
$\lvert\gamma\rvert\in[0.1,0.4]$. The descriptor consists of the values
of $V$ on a uniform $32\times32$ grid over $\mathcal{Q}$. Reference
trajectories are integrated in double precision on $[0,T]$ with $T=4$,
from initial conditions confined to the domain by an energy budget. The
dataset contains $38{,}000$ training, $1{,}000$ validation and $1{,}000$
test systems with $32$ trajectories each, and training pairs
$(z,\tau,z^{+})$ are sampled online with stratified time gaps. Sampling
ranges and integration settings are given in the appendix.

\paragraph{Evaluation systems.}
Beyond the held-out random systems we evaluate on four analytic
benchmark systems not seen during training. These are an anisotropic
oscillator (B1), a coupled Duffing system (B2), a Mexican hat potential
(B3) and a quartic coupled oscillator (B4). They share the quartic form
of the backbone and carry no Gaussian process perturbation, so they lie
outside the training distribution, in which every system has
$\delta\geq0.2$. Each is evaluated from $64$ initial conditions drawn
from its own energy sublevel set. Their coefficients are given in the
appendix.

\paragraph{Metrics.}
On the held-out random systems we report the normalized mean squared training loss $\mathcal{L}_{\mathrm{pred}}$. On the benchmark systems the state
error $\mathcal{E}_{z}(t)$ is
$\lVert\widehat z_i(t)-z_i(t)\rVert/\sigma$ and the energy error
$\mathcal{E}_{H}(t)$ is
$\lvert H(\widehat z_i(t))-H(z_i(t))\rvert/(E_{\max}-V_{\min})$, both
averaged over $N=64$ initial conditions, with
$\sigma=\sqrt{E_{\max}-V_{\min}}$. Geometric consistency is measured by
the conformal symplectic structure error
\begin{align*}
    \mathcal{E}_\omega(z,t)
    &= \frac{\big\lVert
        D\widehat{\Phi}_{\theta}^{t}(z)^{\top}J\,
        D\widehat{\Phi}_{\theta}^{t}(z)-e^{\gamma t}J
    \big\rVert_{F}}
    {\big\lVert e^{\gamma t}J\big\rVert_{F}},
\end{align*}
evaluated in double precision.

\paragraph{Long-horizon protocol.}
Predictions beyond the trained horizon are produced by composition
rather than by a single large query. We partition $[0,10T]$ into ten
windows of length $T$, advance the anchors by
$z_{k+1}=\widehat{\Phi}_{\theta}^{T}(z_k)$, and fill each window with
direct queries $\widehat{\Phi}_{\theta}^{\tau}(z_k)$ for $\tau\in(0,T]$.
Every query time therefore lies inside the trained range and crossing a
window boundary is pure composition, so
Corollary~\ref{cor:composition} applies at every step. These results use
$\lvert\gamma\rvert=0.1$, the least damped end of the training range,
since at larger dissipation the state has decayed to the fixed point
well before $t=10T$.

\paragraph{Models and training.}
We compare CoSynFlow, which uses $L=12$ blocks, with five models.
\textbf{Symplectic Flow} uses the same conditioned blocks with the
conformal scaling removed, so it represents ordinary symplectic flow
models and is also the direct ablation of the conformal component.
\textbf{MLP Flow} is an unstructured direct-flow model with the same
conditioner, and \textbf{MLP Flow with soft constraint} adds a conformal
symplectic Jacobian penalty to it. \textbf{DeepONet} reads the system in
its branch network and $(z_0,t)$ in its trunk network, and
\textbf{Transolver Flow} replaces the conditioner of MLP Flow with a
Physics-Attention encoder. We further consider two ablations of
CoSynFlow. \textbf{Fixed Allocation} replaces the learned scaling
allocations by $\alpha_\ell=1/L$, and \textbf{No Correction} removes the
trainable potentials. 
All models share the same training data, normalization and optimization
budget, and every baseline matches the parameter count of CoSynFlow
within $3\%$, with DeepONet and Transolver using their native encoders.
Results are means over three seeds, each run on a single H100 GPU
partition. Remaining settings are in the appendix.


\subsection{Accuracy and Geometric Consistency}
\label{sec:main_results}

\begin{table}[t]
\centering
\caption{
Accuracy on held-out random systems and geometric consistency of the
learned map at $t=2$. Lower is better in both columns.
}
\label{tab:main}

\setlength{\tabcolsep}{4pt}
\small
\begin{tabular}{lcc}
\toprule
Model & In-distribution MSE & Structure error \\
\midrule
\multicolumn{3}{l}{\textit{Baselines}} \\
Symplectic Flow & $1.3_{\pm0.0}{\times}10^{-2}$ & $8.2_{\pm0.0}{\times}10^{-1}$ \\
MLP Flow & $1.2_{\pm0.0}{\times}10^{-2}$ & $2.6_{\pm0.3}{\times}10^{-1}$ \\
MLP Flow (soft) & $1.9_{\pm0.0}{\times}10^{-2}$ & $1.3_{\pm0.2}{\times}10^{-1}$ \\
DeepONet & $2.7_{\pm0.0}{\times}10^{-2}$ & $7.4_{\pm1.3}{\times}10^{-1}$ \\
Transolver & $7.8_{\pm0.2}{\times}10^{-3}$ & $2.2_{\pm0.3}{\times}10^{-1}$ \\
\midrule
\multicolumn{3}{l}{\textit{Ablations}} \\
\quad Fixed Allocation & $3.9_{\pm0.9}{\times}10^{-4}$ & $1.6_{\pm0.2}{\times}10^{-15}$ \\
\quad No Correction & $6.1_{\pm0.9}{\times}10^{-4}$ & $9.9_{\pm2.7}{\times}10^{-16}$ \\
\midrule
\textbf{CoSynFlow} & $\bm{3.4_{\pm0.2}{\times}10^{-4}}$ & $1.3_{\pm0.2}{\times}10^{-15}$ \\
\bottomrule
\end{tabular}

\end{table}

In Table~\ref{tab:main}, CoSynFlow attains the lowest in-distribution MSE, with a margin over the best baseline far larger than the seed variation, and both ablations outperform every baseline by more than an order of magnitude. The
structure error separates the models far more sharply. CoSynFlow and the
two ablations keep it at double precision round-off, fourteen orders of
magnitude below the unstructured and softly constrained models, as
predicted by Theorem~\ref{thm:exact}. The differences among the three
are round-off noise. After sixteen compositions, a total time of $7.5T$,
the three are still of order $10^{-13}$ while every other model exceeds
$20$, which is the quantitative form of Corollary~\ref{cor:composition}.

Fixing the allocations to $\alpha_\ell=1/L$ preserves the exact
structure, since only the normalization enters the guarantee, and costs
$15\%$ in accuracy. Removing the trainable potentials leaves the
classical conformal Strang splitting and costs a factor of $1.8$.
Removing the conformal scaling instead gives Symplectic Flow, whose
structure error is exactly $e^{-\gamma t}-1$ and is identical across
seeds, since it is fixed by the geometry the model enforces rather than
by what it learns.
The three ablations therefore separate the roles of the conformal
factor, the allocation and the learned potentials.

\subsection{Long-Horizon Prediction}
\label{sec:long_horizon}

\begin{table*}[t]
\centering
\caption{
Long-horizon error on the four benchmark systems, averaged over
$t\in(0,10T]$ and over $64$ initial conditions at
$\lvert\gamma\rvert=0.1$. Predictions beyond $T$ are obtained by
composing the learned flow over ten windows. The last column is the
energy error averaged over the four systems. DeepONet diverges
on B3. Per-system energy errors are in the appendix.
}
\label{tab:long}
\setlength{\tabcolsep}{4pt}
\small
\begin{tabular}{lccccc|c}
\toprule
& \multicolumn{5}{c|}{State error $\mathcal{E}_z$} & Energy error $\mathcal{E}_H$ \\
\cmidrule(lr){2-6}\cmidrule(lr){7-7}
Model & B1 & B2 & B3 & B4 & Mean & Mean \\
\midrule
\multicolumn{7}{l}{\textit{Baselines}} \\
Symplectic Flow & $8.2_{\pm0.0}{\times}10^{-1}$ & $1.4_{\pm0.0}{\times}10^{0}$ & $1.3_{\pm0.0}{\times}10^{0}$ & $7.8_{\pm0.0}{\times}10^{-1}$ & $1.1_{\pm0.0}{\times}10^{0}$ & $5.1_{\pm0.0}{\times}10^{-1}$ \\
MLP Flow & $1.6_{\pm0.4}{\times}10^{-1}$ & $4.2_{\pm0.4}{\times}10^{-1}$ & $7.0_{\pm0.3}{\times}10^{-1}$ & $1.5_{\pm0.1}{\times}10^{-1}$ & $3.6_{\pm0.2}{\times}10^{-1}$ & $2.7_{\pm0.1}{\times}10^{-2}$ \\
MLP Flow (soft) & $2.0_{\pm0.3}{\times}10^{-1}$ & $6.3_{\pm0.6}{\times}10^{-1}$ & $8.0_{\pm0.5}{\times}10^{-1}$ & $2.7_{\pm0.1}{\times}10^{-1}$ & $4.7_{\pm0.2}{\times}10^{-1}$ & $2.6_{\pm0.2}{\times}10^{-2}$ \\
DeepONet & $2.2_{\pm0.6}{\times}10^{-1}$ & $6.8_{\pm0.2}{\times}10^{-1}$ & $1.4_{\pm0.8}{\times}10^{0}$ & $2.8_{\pm0.0}{\times}10^{-1}$ & $6.4_{\pm2.2}{\times}10^{-1}$ & $1.2_{\pm2.0}{\times}10^{6}$ \\
Transolver & $9.9_{\pm1.1}{\times}10^{-2}$ & $3.7_{\pm0.3}{\times}10^{-1}$ & $5.9_{\pm0.2}{\times}10^{-1}$ & $1.2_{\pm0.1}{\times}10^{-1}$ & $2.9_{\pm0.1}{\times}10^{-1}$ & $2.2_{\pm0.2}{\times}10^{-2}$ \\
\midrule
\multicolumn{7}{l}{\textit{Ablations}} \\
\quad Fixed Allocation & $\bm{1.4_{\pm0.4}{\times}10^{-2}}$ & $5.2_{\pm1.4}{\times}10^{-2}$ & $1.8_{\pm0.1}{\times}10^{-1}$ & $2.6_{\pm0.7}{\times}10^{-2}$ & $6.8_{\pm0.7}{\times}10^{-2}$ & $2.2_{\pm0.4}{\times}10^{-3}$ \\
\quad No Correction & $5.0_{\pm0.1}{\times}10^{-2}$ & $6.2_{\pm0.4}{\times}10^{-2}$ & $1.8_{\pm0.1}{\times}10^{-1}$ & $4.7_{\pm0.2}{\times}10^{-2}$ & $8.5_{\pm0.4}{\times}10^{-2}$ & $2.7_{\pm0.5}{\times}10^{-3}$ \\
\midrule
\textbf{CoSynFlow} & $1.5_{\pm0.2}{\times}10^{-2}$ & $\bm{4.3_{\pm0.3}{\times}10^{-2}}$ & $\bm{1.6_{\pm0.1}{\times}10^{-1}}$ & $\bm{2.4_{\pm0.6}{\times}10^{-2}}$ & $\bm{6.0_{\pm0.4}{\times}10^{-2}}$ & $\bm{1.8_{\pm0.1}{\times}10^{-3}}$ \\
\bottomrule
\end{tabular}

\end{table*}

\begin{figure}[t]
    \centering
    \includegraphics[width=\linewidth]{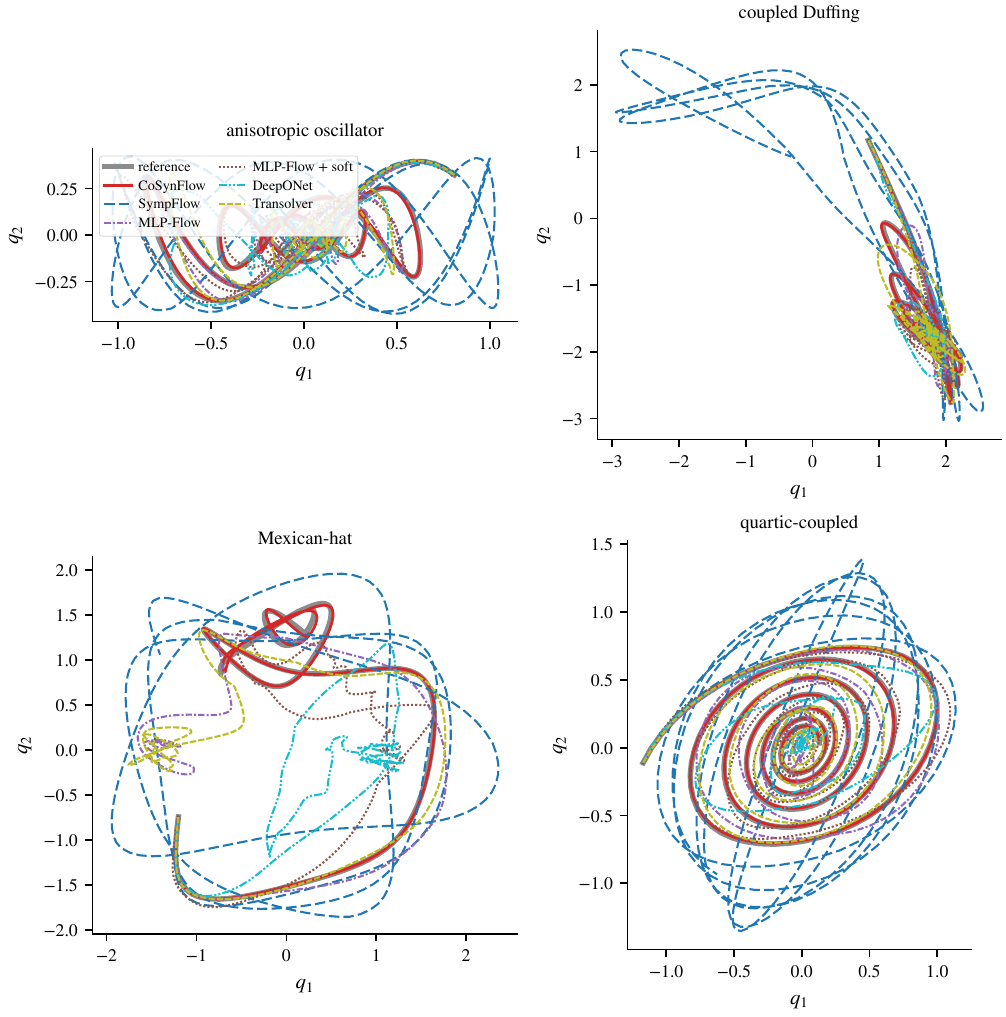}
    \caption{
    Trajectories in the position plane $(q_1,q_2)$ for the four benchmark
systems over $0\leq t\leq 10T$ at $\lvert\gamma\rvert=0.1$, one initial
condition each, with the reference in thick gray. Predictions are
obtained by composing the learned flow over ten windows of length $T$.
    }
    \label{fig:phase}
\end{figure}

\begin{figure}[t]
    \centering
    \includegraphics[width=\linewidth]{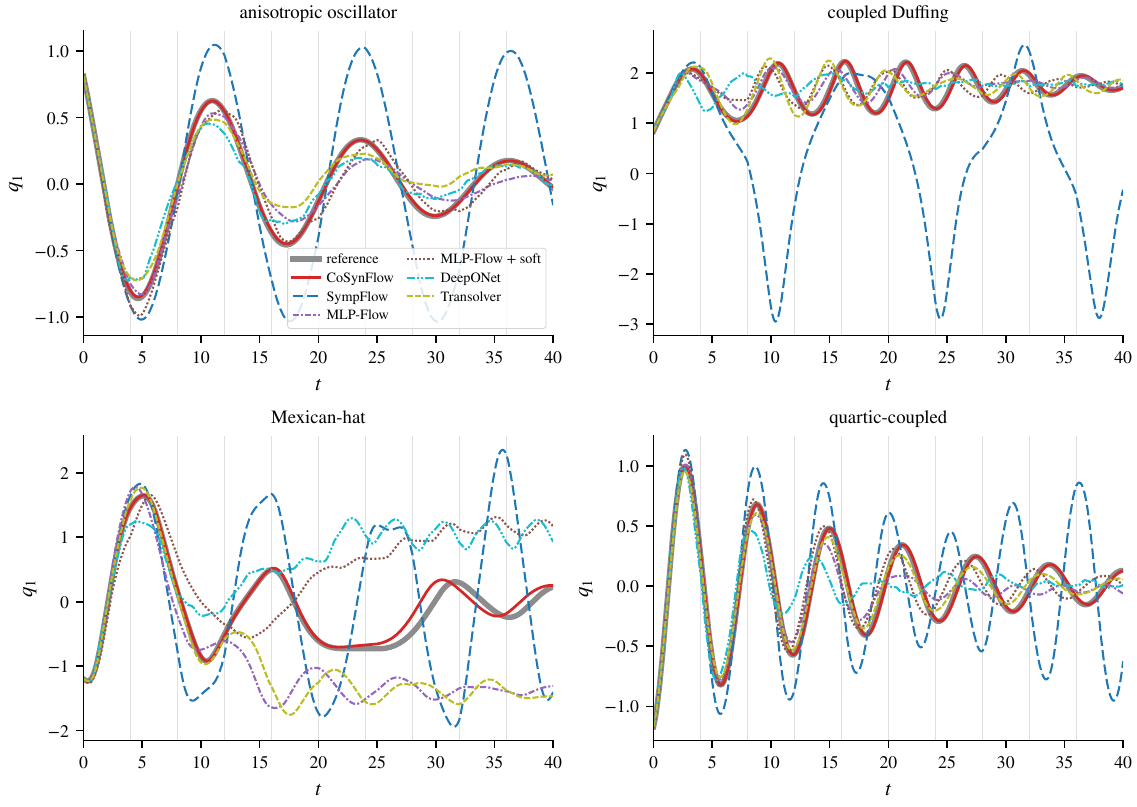}
    \caption{
    First position component $q_1$ versus time for the same trajectories as
Figure~\ref{fig:phase}, with the reference in thick gray. Faint vertical
lines mark the ten composition window boundaries, inside which the model
is queried directly at times in $(0,T]$.
    }
    \label{fig:q1}
\end{figure}

Table~\ref{tab:long} reports accuracy over ten times the trained
horizon. CoSynFlow attains the lowest state error on three of the four benchmark systems and the lowest average error in both metrics, ahead of the best baseline by nearly a factor of five in state error and twelve in energy error. 
Symplectic Flow has the largest state error of all
models, three times that of the unconstrained MLP Flow, so ordinary
symplecticity is not a weaker constraint here but the wrong one, and
enforcing it exactly is worse than enforcing nothing at all.

Figures~\ref{fig:phase} and~\ref{fig:q1} show why the gap widens with
time. The reference trajectories spiral toward a potential minimum.
CoSynFlow contracts phase-space volume at exactly the rate $e^{d\gamma t}$ and tracks the reference on all four systems over the full horizon. Symplectic Flow preserves volume and cannot contract at
all, so its amplitude never decays, and on the coupled Duffing system it
leaves the well in which the true solution settles. The unstructured
models contract at a learned rather than prescribed rate, lose the
oscillation phase within a few windows, and on the Mexican-hat potential
settle in a different well from the reference, so their error there is
qualitative rather than numerical, whereas the residual deviation of CoSynFlow on that system stays within the correct well. Theorem~\ref{thm:exact} constrains how the map contracts phase space, not where a trajectory goes, so the amplitude and the phase are both learned. The predictions also cross the ten window boundaries
without a visible break, so composition introduces no artefact of its
own. 

When a single system is learned in isolation rather than across the
family, the same architecture reaches an MSE as low as $3\times10^{-7}$,
so the errors reported here reflect the cost of amortizing one model
over a family rather than a limitation of the construction.

\subsection{Extrapolation in Dissipation}
\label{sec:extrapolation}

\begin{figure}[!tb]
    \centering
    \includegraphics[width=\linewidth]{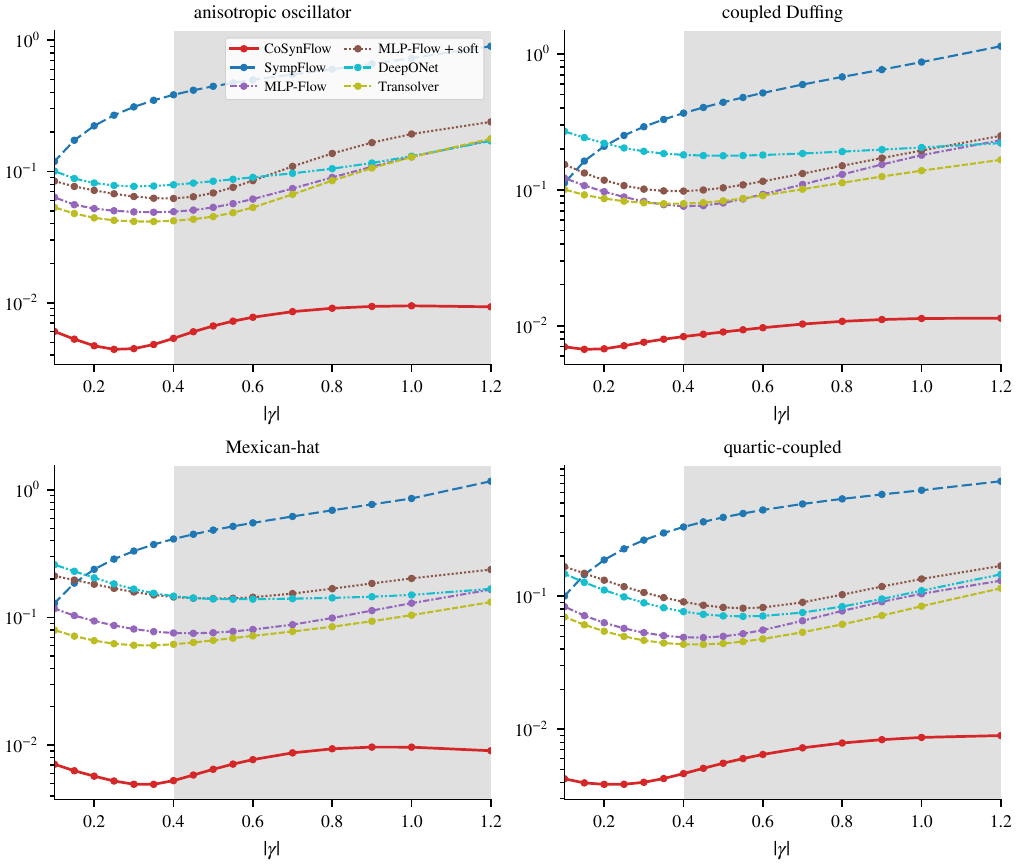}
    \caption{
    State error $\mathcal{E}_z$ averaged over $t\in(0,T]$, on a logarithmic scale. The shaded region lies outside the
training range $\lvert\gamma\rvert\in[0.1,0.4]$ and extends to three
times its upper end.
    }
    \label{fig:gamma}
\end{figure}

Figure~\ref{fig:gamma} varies the dissipation parameter over an interval
that reaches three times the largest value seen during training. Every
model has a shallow minimum inside the training range and degrades in
both directions, but the error of CoSynFlow varies by less than a factor of three over
the whole sweep and stays below every baseline at every value of
$\lvert\gamma\rvert$. The margin grows from about ten
times at the edge of the training range to about twenty times at the far
end. The scaling factor $e^{\gamma t\alpha_\ell}$ is applied
analytically, so the conformal factor is correct at any $\gamma$.
Accuracy outside the training range remains empirical, since the
coefficients and potentials are also conditioned on $\gamma$, which the
baselines must learn in full. Symplectic Flow degrades fastest and
monotonically, because the factor it fails to represent grows with
$\lvert\gamma\rvert$.

\subsection{Physics-Informed Training}
\label{sec:physics_informed}

Because $\widehat{\Phi}_{\theta}^{t}$ is differentiable in the query
time, the residual of the governing equation can be evaluated directly
on the learned flow,
\begin{align*}
    \mathcal{L}_{\mathrm{PI}}(\theta)
    = \frac{1}{\lvert\mathcal{C}\rvert}
    \sum_{(z,t)\in\mathcal{C}}
    \frac{\big\lVert
        \partial_t\widehat{\Phi}_{\theta}^{t}(z)
        -X\big(\widehat{\Phi}_{\theta}^{t}(z)\big)
    \big\rVert^{2}}
    {4\,(\sigma/T)^{2}},
\end{align*}
where $X$ is the vector field \eqref{eq:dissipative_hamiltonian} and
$\sigma/T$ is a velocity scale that makes the term dimensionless. The
collocation set $\mathcal{C}$ consists of states drawn from the energy
sublevel set and query times drawn from $[0,T]$, resampled at every
step, so the term supplies supervision at no data cost. A model of a
fixed-step map cannot form this residual, since its output is defined
only at multiples of the step and carries no derivative in the query
time.

We test the term on a single fixed system, the coupled Duffing
benchmark B2 at $\lvert\gamma\rvert=0.2$, with only two reference
trajectories, the regime in which supervision that needs no data is
expected to matter. The same architecture is trained twice under an
identical budget, once with $\mathcal{L}_{\mathrm{pred}}$ alone and once
with $\mathcal{L}_{\mathrm{pred}}+\lambda\mathcal{L}_{\mathrm{PI}}$ at
$\lambda=1$, with three seeds each, and errors are measured on $64$
held-out trajectories.

\begin{table}[t]
\centering
\caption{
Physics-informed training of CoSynFlow on a single system with two
training trajectories. The two rows differ only in the training
objective.
}
\label{tab:physics_informed}
\setlength{\tabcolsep}{5pt}
\small
\begin{tabular}{lcc}
\toprule
Training objective & State error & Energy error \\
\midrule
$\mathcal{L}_{\mathrm{pred}}$ & $1.2_{\pm0.5}{\times}10^{-2}$ & $2.9_{\pm1.7}{\times}10^{-3}$ \\
$\mathcal{L}_{\mathrm{pred}}+\lambda\mathcal{L}_{\mathrm{PI}}$ & $\bm{2.9_{\pm0.2}{\times}10^{-3}}$ & $\bm{4.9_{\pm0.4}{\times}10^{-4}}$ \\
\bottomrule
\end{tabular}

\end{table}

In Table~\ref{tab:physics_informed}, the residual term lowers the state
error by a factor of four and the energy error by six, and a similar
gain is observed after composition to $10T$. Every seed with the term
beats every seed without it, and the spread across seeds falls by almost
an order of magnitude. The structure error is unaffected by the additional term, as reported in the appendix. The term therefore changes what the model learns and never whether the geometry holds.

\section{Conclusion}

We presented CoSynFlow, a system-conditioned neural flow for the
solution maps of dissipative Hamiltonian dynamics. Each time-$t$ map
composes symmetric blocks of gradient shears and conformal scalings. The
shears are symplectic because they are generated by scalar potentials,
and the scaling factors are normalized to multiply to $e^{\gamma t}$, so
the map is exactly conformal symplectic for arbitrary trainable
parameters, at every query time and under any number of compositions.
Because the query time is a continuous differentiable input, one trained
model is evaluated at any time, composed beyond the training horizon,
and trained with a physics-informed residual. Conditioning on a
Hamiltonian descriptor and the dissipation parameter extends the
construction to a family of systems without weakening the guarantee.

Across four unseen benchmark systems the model attains the lowest state
and energy error among structured and unstructured baselines at a
matched parameter count, while holding the structure error at double
precision round-off, both at a single query time and under repeated
composition. Accuracy is retained well outside the range of dissipation rates seen during training, where the conformal factor is applied analytically. The comparison also shows that a structure can be enforced exactly and still be the wrong one, since ordinary symplecticity performs worse here than no constraint at all.

Being differentiable and exactly structured, the learned map can stand
in for a solver in loops where many solutions of one system family are
required.
The
physics-informed residual needs no reference trajectory, which points to
building such surrogates from far less data. More broadly, factoring a
flow into structure preserving parts and one analytic part that carries
the prescribed geometric weight is not specific to conformal
symplecticity, and we expect it to transfer to other structured
dynamics.

\appendix

\section{Proofs}

\subsection{Proof of Theorem 1 and Corollary 1}

\begin{proof}[Proof of Theorem~1]
Write $\omega_z(u,v)=u^{\top}Jv$, so that for a $C^{1}$ map $\Psi$ we
have
\begin{align*}
    (\Psi^{*}\omega)_z(u,v)=u^{\top}D\Psi(z)^{\top}J\,D\Psi(z)\,v,
\end{align*}
and
$\Psi^{*}\omega=c\,\omega$ holds precisely when
$D\Psi(z)^{\top}J\,D\Psi(z)=cJ$ for every $z$. We call such a map
conformal symplectic with factor $c$.
If $\Psi_1$ and $\Psi_2$ are conformal symplectic with factors $c_1$ and
$c_2$, the chain rule gives
\begin{equation}
    D(\Psi_2\circ\Psi_1)^{\top}J\,D(\Psi_2\circ\Psi_1)
    = D\Psi_1^{\top}\big[D\Psi_2^{\top}J\,D\Psi_2\big]D\Psi_1
    = c_1c_2\,J ,
    \label{eq:app_mult}
\end{equation}
where the bracket is evaluated at $\Psi_1(z)$. Determinants also
multiply, since $\det(MN)=\det M\det N$.

Set
$A_{\ell}:=\tfrac{t}{2}\At_{\ell}(\,\cdot\,;t,\xi)$,
$B_{\ell}:=t\,\Bt_{\ell}(\,\cdot\,;t,\xi)$ and
$\lambda_{\ell}:=e^{\gamma t\alpha_\ell(\xi)/2}$.
By hypothesis $A_{\ell}$ and $B_{\ell}$ are $C^{2}$ on $\R^{d}$, and $\lambda_{\ell}>0$. 
Because
$\alpha_\ell(\xi)$ does not depend on the phase-space variable, each
$\lambda_{\ell}$ is a constant, and because the system representation is
held fixed while differentiating in $(q,p)$, the factors of
$F_{\ell}^{t}$ are the three maps
\begin{equation*}
    P_{A}(q,p)=\big(q,\;p-\nabla A(q)\big),
    \qquad
    Q_{B}(q,p)=\big(q+\nabla B(p),\;p\big),
    \qquad
    C_{\lambda}(q,p)=(q,\;\lambda p)
\end{equation*}
with $A=A_{\ell}$, $B=B_{\ell}$ and $\lambda=\lambda_{\ell}$.

For the shears, write $S:=\nabla^{2}A(q)$ and $R:=\nabla^{2}B(p)$. Then
\begin{equation*}
    DP_{A}^{\top}J\,DP_{A}
    = \begin{pmatrix} I & -S^{\top}\\ 0 & I\end{pmatrix}
      \begin{pmatrix} -S & I\\ -I & 0\end{pmatrix}
    = \begin{pmatrix} S^{\top}-S & I\\ -I & 0\end{pmatrix}
    = J ,
    \qquad
    DQ_{B}^{\top}J\,DQ_{B}
    = \begin{pmatrix} 0 & I\\ -I & R^{\top}-R\end{pmatrix}
    = J ,
\end{equation*}
since the Hessian of a $C^{2}$ scalar function is symmetric. Both
Jacobians are block unitriangular, so
$\det DP_{A}=\det DQ_{B}=1$. For the scaling,
$DC_{\lambda}=\operatorname{diag}(I,\lambda I)$ gives
\begin{equation*}
    DC_{\lambda}^{\top}J\,DC_{\lambda}
    = \begin{pmatrix} 0 & \lambda I\\ -\lambda I & 0\end{pmatrix}
    = \lambda J ,
    \qquad
    \det DC_{\lambda}=\lambda^{d} .
\end{equation*}

Each block $F_{\ell}^{t}$ consists of three shears of factor one and two
scalings of factor $\lambda_{\ell}$, so by \eqref{eq:app_mult}
\begin{equation*}
    \big(F_{\ell}^{t}\big)^{*}\omega
    = \lambda_{\ell}^{2}\,\omega
    = e^{\gamma t\alpha_\ell(\xi)}\,\omega ,
    \qquad
    \det DF_{\ell}^{t}=e^{d\gamma t\alpha_\ell(\xi)} ,
\end{equation*}
and composing the $L$ blocks with
$\sum_{\ell=1}^{L}\alpha_\ell(\xi)=1$ yields
\begin{equation*}
    \big(\Phih^{t}\big)^{*}\omega
    = \prod_{\ell=1}^{L}e^{\gamma t\alpha_\ell(\xi)}\,\omega
    = e^{\gamma t}\omega ,
    \qquad
    \det D\Phih^{t}
    = \prod_{\ell=1}^{L}e^{d\gamma t\alpha_\ell(\xi)}
    = e^{d\gamma t}.
\end{equation*}

Each factor is a bijection of $\R^{2d}$ whose
inverse is a map of the same type,
\begin{equation*}
    P_{A}^{-1}(q,p)=\big(q,\;p+\nabla A(q)\big),
    \qquad
    Q_{B}^{-1}(q,p)=\big(q-\nabla B(p),\;p\big),
    \qquad
    C_{\lambda}^{-1}=C_{1/\lambda} ,
\end{equation*}
well defined because $P_{A}$ leaves $q$ unchanged, $Q_{B}$ leaves $p$
unchanged and $\lambda>0$. All of these are $C^{1}$ since
$A,B\in C^{2}$, so every factor is a global $C^{1}$ diffeomorphism of
$\R^{2d}$ and so is $\Phih^{t}$, with
\begin{equation*}
    \big(\Phih^{t}\big)^{-1}
    = \big(F_{1}^{t}\big)^{-1}\circ\cdots\circ\big(F_{L}^{t}\big)^{-1},
    \qquad
    \big(F_{\ell}^{t}\big)^{-1}
    = C_{\lambda_{\ell}}^{-1}\circ P_{A_{\ell}}^{-1}
      \circ Q_{B_{\ell}}^{-1}\circ P_{A_{\ell}}^{-1}
      \circ C_{\lambda_{\ell}}^{-1} .
\end{equation*}
Finally, the prefactors in $A_{\ell}$ and $B_{\ell}$ do not depend on
the phase-space variable, so at $t=0$ both gradients vanish and
$\lambda_{\ell}=1$. Every factor is then the identity and
$\Phih^{0}=\mathrm{id}$.
\end{proof}

\begin{proof}[Proof of Corollary~1]
The hypotheses of Theorem~1 hold for every $t\in[0,T]$, so
$\big(\Phih^{t_i}\big)^{*}\omega=e^{\gamma t_i}\omega$ for each $i$, and
applying \eqref{eq:app_mult} repeatedly gives
\begin{equation*}
    \Big(\Phih^{t_k}\circ\cdots\circ\Phih^{t_1}\Big)^{*}\omega
    = \prod_{i=1}^{k}e^{\gamma t_i}\,\omega
    = e^{\gamma(t_1+\cdots+t_k)}\omega .
\end{equation*}
The identity holds for any $t_1,\ldots,t_k \in [0,T]$, including sums that exceed $T$.
\end{proof}

\subsection{Proof of Theorem 2}

\begin{proof}[Proof of Theorem~2]
\emph{Change of variables.}
For $s\in\R$ let $\Lambda_{s}:=C_{e^{-\gamma s}}$, that is
$\Lambda_{s}(q,p)=(q,e^{-\gamma s}p)$, so $\Lambda_{0}=\mathrm{id}$. Let $z(t)=\varphi^{t}(z_0)$ and put $y(t):=\Lambda_{t}(z(t))$, whose
components are written $y=(y_q,y_p)$, so that $y_q=q$ and
$y_p=e^{-\gamma t}p$. Then
\begin{equation*}
    \dot y_q = \nabla_pH(z),
    \qquad
    \dot y_p = -\gamma e^{-\gamma t}z_p + e^{-\gamma t}\dot z_p
    = -e^{-\gamma t}\nabla_qH(z) ,
\end{equation*}
and with
\begin{equation}
    \Ht(t,y) := e^{-\gamma t}H\big(y_q,\;e^{\gamma t}y_p\big)
    \label{eq:app_Htilde}
\end{equation}
one has $\nabla_{y_q}\Ht(t,y)=e^{-\gamma t}\nabla_qH(z)$ and
$\nabla_{y_p}\Ht(t,y)=\nabla_pH(z)$ for $z=(y_q,e^{\gamma t}y_p)$, so
$\dot y=J\nabla_y\Ht(t,y)$. Writing $\psi^{t}$ for the flow of this
equation and using 
$\Lambda_{0}=\mathrm{id}$,
\begin{equation}
    \varphi^{t} = C_{e^{\gamma t}}\circ\psi^{t} .
    \label{eq:app_conjugation}
\end{equation}
By \eqref{eq:app_Htilde} and $H\in C^{2}(\R^{2d})$, the function
$\Ht(t,\cdot)$ is $C^{2}$ on $\R^{2d}$ for each $t$ and depends smoothly
on $t$.

\emph{Transfer of the containment assumption.}
Set $K'_{y}:=\bigcup_{s\in[0,T]}\Lambda_{s}(K')$, the image of the
compact set $[0,T]\times K'$ under the continuous map
$(s,z)\mapsto\Lambda_{s}(z)$, hence compact. For $z\in K$ and
$s\in[0,T]$, \eqref{eq:app_conjugation} gives
$\psi^{s}(z)=\Lambda_{s}(\varphi^{s}(z))$, and since $\Lambda_{s}$ is a
linear isomorphism,
\begin{equation*}
    \psi^{s}(K)\subset\Lambda_{s}\big(\operatorname{int}K'\big)
    =\operatorname{int}\Lambda_{s}(K')\subseteq\operatorname{int}K'_{y} .
\end{equation*}
Every trajectory of $\Ht$ issued from $K$ therefore stays in the fixed
compact set $K'_{y}$ on $[0,T]$. The statement of
\cite[Theorem~1]{snf} assumes forward invariance, but its proof uses
containment only in this form, through a Gronwall estimate together with
a stopping-time argument on a compact neighbourhood.

\emph{Approximation of $\psi^{t}$.}
We apply the argument of \cite[Appendix~F]{snf} to $\Ht$ on $K'_{y}$ and
recall its five steps. First, by the Weierstrass approximation theorem
$\Ht$ is replaced by a Hamiltonian polynomial in the phase-space
variable with coefficients continuous in $t$, whose flow is uniformly
close to $\psi^{t}$ on $K\times[0,T]$. Second, that flow is split into
$N$ substeps of length $t/N$. Third, each substep is approximated to
$O(t^{2}/N^{2})$ by the time-one flow of a Hamiltonian whose two parts
depend on the positions and on the momenta separately
\cite[Lemma~1]{turaev}, and the two parts are then replaced by
single-hidden-layer networks, which are dense in $C^{1}$ on compact sets
\cite{mlp2}. Fourth, a Lie--Trotter splitting turns
each such time-one flow into a composition of maps
\begin{equation}
    \big(q,\,p\big)\mapsto\big(q+\nabla_p[\,\widetilde{V}(t,p)-\widetilde{V}(0,p)\,],\;p\big),
    \qquad
    \big(q,\,p\big)\mapsto\big(q,\;p-\nabla_q[\,\widetilde{V}(t,q)-\widetilde{V}(0,q)\,]\big),
    \label{eq:app_sfl_layer}
\end{equation}
each pair contributing an error $O(1/R^{2})$ over $R$ substeps. Fifth,
the approximations are combined, the composition of such maps being
again of the same form. The conclusion is that for every
$\varepsilon_{1}>0$ there exist $M\in\mathbb{N}$ and potentials
$\widetilde{V}^{q}_{1},\ldots,\widetilde{V}^{q}_{M}$,
$\widetilde{V}^{p}_{1},\ldots,\widetilde{V}^{p}_{M}$, each $C^{2}$ in
the phase-space variable and $C^{1}$ in $t$, such that
\begin{equation}
    S^{t}:=Q_{F^{t}_{M}}\circ P_{G^{t}_{M}}\circ\cdots\circ
           Q_{F^{t}_{1}}\circ P_{G^{t}_{1}} ,
    \qquad
    \begin{aligned}
    G^{t}_{m} &:= \widetilde{V}^{q}_{m}(t,\cdot)-\widetilde{V}^{q}_{m}(0,\cdot),\\
    F^{t}_{m} &:= \widetilde{V}^{p}_{m}(t,\cdot)-\widetilde{V}^{p}_{m}(0,\cdot),
    \end{aligned}
    \label{eq:app_sympflow}
\end{equation}
satisfies
$\sup_{z\in K}\sup_{t\in[0,T]}\norm{S^{t}(z)-\psi^{t}(z)}<\varepsilon_{1}$,
with $P_{A}$ and $Q_{B}$ the gradient shears of the proof of
Theorem~1.

\emph{Normal form of the architecture.}
Besides $P_{A}\circ P_{A'}=P_{A+A'}$ and
$C_{\lambda}\circ C_{\mu}=C_{\lambda\mu}$,
\begin{equation}
    P_{A}\circ C_{\lambda}=C_{\lambda}\circ P_{A/\lambda},
    \qquad
    Q_{B}\circ C_{\lambda}=C_{\lambda}\circ Q_{B^{\lambda}},
    \qquad
    B^{\lambda}(p):=\tfrac{1}{\lambda}B(\lambda p),
    \label{eq:app_commute}
\end{equation}
the second because $\nabla B^{\lambda}(p)=\nabla B(\lambda p)$. Applying
\eqref{eq:app_commute} inside one block gives
$F_{\ell}^{t}=C_{\lambda_{\ell}^{2}}\circ P_{A_{\ell}/\lambda_{\ell}}
\circ Q_{B_{\ell}^{\lambda_{\ell}}}\circ P_{A_{\ell}/\lambda_{\ell}}$,
and moving every scaling to the left across the remaining blocks yields
\begin{equation}
    \Phih^{t}
    = C_{e^{\gamma t}}\circ
      P_{\widehat{G}_{L+1}}\circ Q_{\widehat{F}_{L}}\circ
      P_{\widehat{G}_{L}}\circ\cdots\circ
      Q_{\widehat{F}_{1}}\circ P_{\widehat{G}_{1}} ,
    \label{eq:app_normalform}
\end{equation}
the total scaling being
$\prod_{\ell}\lambda_{\ell}^{2}=e^{\gamma t}$ by
$\sum_{\ell}\alpha_\ell(\xi)=1$. Writing
$c_{\ell}:=\big(\lambda_{\ell}\prod_{j<\ell}\lambda_{j}^{2}\big)^{-1}>0$
and $\kappa_{\ell}:=\lambda_{\ell}\prod_{j<\ell}\lambda_{j}^{2}$, the
merged potentials are
\begin{equation*}
    \widehat{F}_{\ell}=\big(B_{\ell}\big)^{\kappa_{\ell}},
    \qquad
    \widehat{G}_{1}=c_{1}A_{1},
    \qquad
    \widehat{G}_{\ell}=c_{\ell-1}A_{\ell-1}+c_{\ell}A_{\ell}
    \ \ (2\le\ell\le L),
    \qquad
    \widehat{G}_{L+1}=c_{L}A_{L} .
\end{equation*}

\emph{Matching.}
Take $L=M+1$. Prescribing $\widehat{G}_{\ell}=G^{t}_{\ell}$ for
$\ell\le M$ determines $A_{1},\ldots,A_{M}$ recursively, since the
relations are triangular with diagonal entries $c_{\ell}\neq 0$.
Prescribing $\widehat{F}_{\ell}=F^{t}_{\ell}$ for $\ell\le M$ determines
$B_{\ell}$, because $(B^{\lambda})^{1/\lambda}=B$; set
$B_{M+1}=0$, so $Q_{\widehat{F}_{M+1}}=\mathrm{id}$ and the last two
kicks of \eqref{eq:app_normalform} merge into
$P_{\widehat{G}_{M+1}+\widehat{G}_{M+2}}$. Choosing
\begin{equation*}
    A_{M+1}=-\frac{c_{M}A_{M}}{c_{M+1}+c_{M+2}}
\end{equation*}
makes this merged kick the identity, and \eqref{eq:app_normalform}
becomes $C_{e^{\gamma t}}\circ S^{t}$.

\emph{Admissibility of the potentials.}
By the abbreviations introduced in the proof of Theorem~1, the block potentials are
$\At_{\ell}(\,\cdot\,;t,\xi)=\tfrac{2}{t}A_{\ell}$ and
$\Bt_{\ell}(\,\cdot\,;t,\xi)=\tfrac{1}{t}B_{\ell}$. Each $A_{\ell}$ and
$B_{\ell}$ is a fixed linear combination of the differences
\eqref{eq:app_sympflow}, and for $\widetilde{V}$ of class $C^{1}$ in $t$
\begin{equation*}
    \frac{\widetilde{V}(t,\cdot)-\widetilde{V}(0,\cdot)}{t}
    \xrightarrow[t\to 0]{} \partial_{t}\widetilde{V}(0,\cdot) ,
\end{equation*}
so $\At_{\ell}$ and $\Bt_{\ell}$ extend continuously to $t=0$ and are
$C^{2}$ in the phase-space variable for every $t\in[0,T]$. Set
$a_{\ell}=b_{\ell}=0$, so that $\At_{\ell}=\delta A_{\ell}$ and
$\Bt_{\ell}=\delta B_{\ell}$. These networks are dense in $C^{1}$ on
compact subsets of $\R^{d}\times[0,T]$
\cite{mlp,mlp2}, so for every
$\eta>0$ there are parameters with
$\norm{\nabla\delta A_{\ell}-\nabla A_{\ell}}_{\infty}\le\eta$ and
$\norm{\nabla\delta B_{\ell}-\nabla B_{\ell}}_{\infty}\le\eta$ on the
relevant compact sets. Each factor of \eqref{eq:app_normalform} is
Lipschitz there, so composing $3L+1$ of them gives a constant $c>0$,
depending only on those Lipschitz constants, with
\begin{equation*}
    \sup_{z\in K}\sup_{t\in[0,T]}
    \norm{\Phih^{t}(z)-C_{e^{\gamma t}}\big(S^{t}(z)\big)}\le c\,\eta .
\end{equation*}

\emph{Conclusion.}
Since $\gamma<0$, the map $C_{e^{\gamma t}}$ is Lipschitz with constant
$e^{\gamma t}\le 1$, so by \eqref{eq:app_conjugation}
\begin{equation*}
    \norm{\Phih^{t}(z)-\varphi^{t}(z)}
    \le \norm{\Phih^{t}(z)-C_{e^{\gamma t}}\big(S^{t}(z)\big)}
      + \norm{C_{e^{\gamma t}}\big(S^{t}(z)\big)
              -C_{e^{\gamma t}}\big(\psi^{t}(z)\big)}
    \le c\,\eta+\varepsilon_{1} .
\end{equation*}
Taking $\varepsilon_{1}=\varepsilon/2$ and $\eta=\varepsilon/(2c)$ gives
the stated bound with $L=M+1$ and parameters $\theta_{L}$ that do not
depend on $t$. The potentials are $C^{2}$ in the phase-space variable
for every $t\in[0,T]$ and the allocations satisfy
$\sum_{\ell}\alpha_\ell(\xi)=1$, so Theorem~1 gives
$\big(\Phih^{t}\big)^{*}\omega=e^{\gamma t}\omega$.
\end{proof}

\section{Experimental Details}


\subsection{Data Generation}

The experiments use two-degree-of-freedom natural Hamiltonians
$H(q,p)=\tfrac12\lVert p\rVert^{2}+V(q)$ on the position domain
$\Q=[-3,3]^{2}$, so that $H_q=V$ and $H_p=\tfrac12\lVert p\rVert^{2}$
in the block potentials of the main paper. Each potential is
$V=V_{\mathrm{q}}+V_{\mathrm{GP}}$ with a quartic backbone
\begin{equation*}
    V_{\mathrm{q}}(q)
    =\tfrac12\big(b_1q_1^{2}+b_2q_2^{2}\big)+c\,q_1q_2
    +\tfrac14\big(a_1q_1^{4}+a_2q_2^{4}+2a_3q_1^{2}q_2^{2}\big) .
\end{equation*}
The coefficients are drawn independently with $b_1,b_2\sim U[-1,2.25]$,
$c\sim U[-0.5,0.5]$ and $a_i=0.5\,u_i^{2}$ with $u_i\sim U[0,1]$, so the
quartic coefficients are concentrated near zero. Negative $b_i$ are
admitted, so the family contains single well, double well and multiple
well landscapes.

The perturbation is a Gaussian process with a squared exponential kernel,
realized with $D=128$ random Fourier features,
\begin{equation*}
    V_{\mathrm{GP}}(q)
    = \beta\sqrt{\tfrac{2}{D}}\sum_{k=1}^{D}
      a_k\cos\big(w_k^{\top}q+\phi_k\big),
    \qquad
    w_k\sim\mathcal{N}(0,I_2)/\ell,
    \quad \phi_k\sim U[0,2\pi],
    \quad a_k\sim\mathcal{N}(0,1),
\end{equation*}
with length scale $\ell\sim U[0.6,1.8]$. The amplitude $\beta$ is set so
that
\begin{equation*}
    \beta\,\max_{q\in\mathcal{G}}\lVert\nabla V_{\mathrm{GP}}^{\,0}(q)\rVert
    = \delta\cdot\operatorname{median}_{q\in\mathcal{G}}
      \lVert\nabla V_{\mathrm{q}}(q)\rVert ,
\end{equation*}
where $V_{\mathrm{GP}}^{\,0}$ is the unscaled feature sum and
$\mathcal{G}$ the descriptor grid. The parameter $\delta$ is therefore
the ratio of the largest perturbation force to the median backbone force,
and it is drawn log-uniformly with $\delta\in[0.2,1.5]$. Since every
training system has $\delta\geq0.2$, the benchmark systems, which have
$\delta=0$, lie outside the training distribution. The length scale
satisfies $\ell\geq0.6$, which is $3.2$ times the descriptor spacing
$6/32=0.1875$, so the sampled potentials are resolved by the descriptor.
The dissipation parameter is drawn log-uniformly with
$\lvert\gamma\rvert\in[0.1,0.4]$.

The descriptor is the vector of values of $V$ on a uniform $32\times32$
grid over $\Q$. In the notation of the main paper this corresponds to
sensor points $r_i=(q_i,0)$ with $q_i$ on that grid, for which
$H(r_i)=V(q_i)$.

A proposed system is accepted only if
$V_{\partial}-V_{\min}\geq0.5$, where $V_{\min}$ is the minimum of $V$
over the grid and $V_{\partial}$ its minimum on the grid boundary. This
rejection removes systems whose well is too shallow to confine a
trajectory. The energy budget is then
$E_{\max}=V_{\min}+0.9\,(V_{\partial}-V_{\min})$.

Initial conditions are drawn by rejection. Positions are uniform on $\Q$,
momenta are $p=r\,u\,p_{\max}$ with $u$ uniform on the unit circle,
$r=v^{1/2}$ for $v\sim U[0,1]$ and
$p_{\max}=\sqrt{2(E_{\max}-V_{\min})}$, and a draw is kept when
$H(q,p)\leq E_{\max}$. The pool is oversampled by a factor of $32$ and
doubled up to five times if a system does not reach its quota. Along solutions the energy is nonincreasing, so a trajectory starting
below $E_{\max}$ cannot reach a part of the domain where $V$ exceeds
$E_{\max}$. Since $V_{\partial}$ is evaluated on the boundary of the
descriptor grid rather than on $\partial\Q$, this was checked
numerically on $40$ validation systems, comprising $1280$ trajectories:
no trajectory left $\Q$, none exceeded its energy budget by more than
$10^{-6}$, and the energy was nonincreasing along every stored
trajectory to the same tolerance.

Reference trajectories are integrated with a fourth-order Runge--Kutta
scheme in double precision, using $1024$ substeps on the fixed interval
$[0,T]$ with $T=4$, and each trajectory is stored at $65$ equally spaced
states. Initial conditions are rounded to single precision before
integration, so the stored initial state coincides with the first stored
trajectory state.

The generator draws $40{,}000$ systems from a single seed,
of which $1{,}000$ are assigned to validation and $1{,}000$ to test,
leaving $38{,}000$ for training, with $32$ trajectories per system. The
system-defining parameters $w,\phi,a,\beta$ and the backbone
coefficients are stored in double precision, so the analytic potential
is reconstructed exactly at training and evaluation time; trajectories
are stored in single precision.

Training pairs $(z,\tau,z^{+})$ are formed online, where $z$ and $z^{+}$
are two stored states of the same trajectory and $\tau$ is the elapsed
time between them. Each step draws $512$ systems and $16$ pairs per
system, for a batch of $8192$. The time gaps are stratified: the index
range $[1,65)$ is divided into $16$ contiguous bins and one gap is drawn
from each bin per system, so every batch contains the full range of
query times.

This construction is used in place of an existing collection because the
setting requires the Hamiltonian and the dissipation parameter of every
system to be known exactly, both entering the architecture directly; a
family in which the distance from the training distribution is
controlled, which is what places the benchmark systems outside it; and
reference solutions accurate enough that the structure error can be
measured at the level of double precision round-off. Published
Hamiltonian trajectory datasets provide trajectories for a small number
of fixed systems and do not supply these three properties.

\subsection{Sampling Ranges and Settings}

\begin{table}[htbp]
\centering
\caption{Sampling ranges, integration settings and hyperparameters.}
\label{tab:settings}
\small
\begin{tabular}{llc}
\toprule
& Quantity & Value \\
\midrule
\multicolumn{3}{l}{\textit{Domain and descriptor}}\\
& Position domain $\Q$ & $[-3,3]^{2}$ \\
& Descriptor grid & $32\times32$ \\
& Grid spacing & $0.1875$ \\
\midrule
\multicolumn{3}{l}{\textit{System family}}\\
& $b_1,b_2$ & $U[-1,2.25]$ \\
& $c$ & $U[-0.5,0.5]$ \\
& $a_1,a_2,a_3$ & $0.5\,U[0,1]^{2}$ \\
& Random Fourier features $D$ & $128$ \\
& Length scale $\ell$ & $U[0.6,1.8]$ \\
& Perturbation ratio $\delta$ & $\log U[0.2,1.5]$ \\
& Dissipation $\lvert\gamma\rvert$ & $\log U[0.1,0.4]$ \\
& Acceptance threshold $V_{\partial}-V_{\min}$ & $\geq 0.5$ \\
& Energy budget fraction & $0.9$ \\
\midrule
\multicolumn{3}{l}{\textit{Integration}}\\
& Scheme & RK4, double precision \\
& Horizon $T$ & $4$ \\
& Substeps & $1024$ \\
& Step size $h$ & $3.906\times10^{-3}$ \\
& Stored states per trajectory & $65$ \\
\midrule
\multicolumn{3}{l}{\textit{Dataset}}\\
& Systems (train / val / test) & $38{,}000$ / $1{,}000$ / $1{,}000$ \\
& Trajectories per system & $32$ \\
& Generator seed & $20260720$ \\
\midrule
\multicolumn{3}{l}{\textit{Optimization}}\\
& Steps & {$80{,}000$} \\
& Systems per step & $512$ \\
& Pairs per system & $16$ \\
& Batch size & $8192$ \\
& Optimizer & AdamW \\
& Learning rate & $10^{-4}$, cosine to $10^{-6}$ \\
& Warmup steps & $3000$ \\
& Weight decay & $10^{-4}$ \\
& Gradient clipping & $0.5$ \\
& Validation interval & $1000$ steps \\
& Validation systems & $512$ \\
& Training seeds & $0$, $1$, $2$ \\
\midrule
\multicolumn{3}{l}{\textit{Architecture}}\\
& Blocks $L$ & $12$ \\
& FNO width / modes / layers & $32$ / $8$ / $4$ \\
& Condition dimension & $128$ \\
& Correction network width & $64$ \\
& Softmax logit bound & $8$ \\
& Soft-penalty weight $\lambda_{\mathrm{soft}}$ & $1$ \\
\midrule
\multicolumn{3}{l}{\textit{Long-horizon evaluation}}\\
& $\lvert\gamma\rvert$ & $0.1$ \\
& Windows & $10$ \\
& Query times per window & $32$ \\
& Initial conditions & $64$ \\
& Reference RK4 steps per unit time & $512$ \\
\bottomrule
\end{tabular}

\end{table}

The final checkpoint is the one with the lowest validation loss. No
hyperparameter search was carried out; the values above were fixed
before the reported runs.

\subsection{Benchmark Systems}

The four benchmark systems have $V=V_{\mathrm{q}}$ with the
coefficients of Table~\ref{tab:bench}, that is, no Gaussian process
perturbation. Initial conditions are drawn from each system's own
energy sublevel set by the same rejection scheme as the training data,
with $E_{\max}$ computed on the native $32\times32$ grid. Reference
trajectories use RK4 in double precision with step size approximately
$10^{-2}$, which agrees with an adaptive high-order reference to about
$10^{-9}$ on all four systems.

\begin{table}[H]
\centering
\caption{Backbone coefficients of the four benchmark systems.}
\label{tab:bench}

\small
\begin{tabular}{llcccccc}
\toprule
& System & $b_1$ & $b_2$ & $c$ & $a_1$ & $a_2$ & $a_3$ \\
\midrule
B1 & Anisotropic oscillator & $0.25$ & $1.96$ & $0$ & $0$ & $0$ & $0$ \\
B2 & Coupled Duffing        & $-0.5$ & $-0.5$ & $0.3$ & $0.25$ & $0.25$ & $0$ \\
B3 & Mexican hat            & $-0.5$ & $-0.5$ & $0$ & $0.25$ & $0.25$ & $0.5$ \\
B4 & Quartic coupled        & $1.0$ & $1.0$ & $0$ & $0$ & $0$ & $0.5$ \\
\bottomrule
\end{tabular}

\end{table}

Figure~\ref{fig:bench_potentials} shows the resulting potentials. 

\begin{figure}[H]
    \centering
    \includegraphics[width=\linewidth]{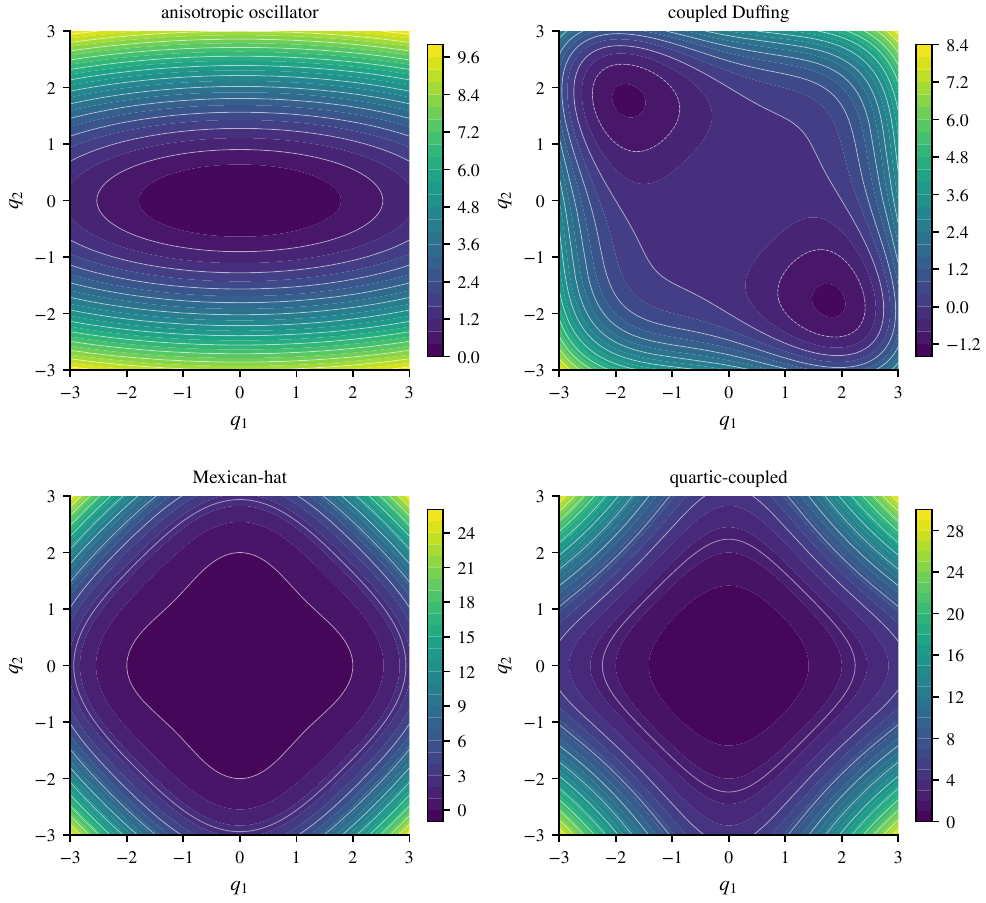}
    \caption{
    Potentials $V$ of the four benchmark systems on
    $[-3,3]^{2}$. 
    }
\label{fig:bench_potentials}
\end{figure}

\subsection{Computing Environment}

All models were trained and evaluated on one MIG partition of profile
\texttt{1g.12gb} of an NVIDIA H100, providing approximately $11.5$\,GB
of device memory. The software stack is PyTorch 2.3.1 with CUDA 12.1 on
Red Hat Enterprise Linux 8. Latency figures reported elsewhere in this
document were measured on the same partition and are not comparable
with full-card measurements.

\section{Additional Results}
\subsection{Per-System Energy Error}

Table~\ref{tab:long_energy} gives the per-system energy errors
summarized by the last column of the main paper. DeepONet diverges on
B3, which is why its mean is dominated by that system.

\begin{table}[htbp]
\centering
\caption{
Long-horizon energy error $\E_{H}$ on the four benchmark systems,
averaged over $t\in(0,10T]$ and over $64$ initial conditions at
$\lvert\gamma\rvert=0.1$, under the protocol of the main paper.
}
\label{tab:long_energy}

\setlength{\tabcolsep}{4pt}
\small
\begin{tabular}{lccccc}
\toprule
Model & B1 & B2 & B3 & B4 & Mean \\
\midrule
\multicolumn{6}{l}{\textit{Baselines}} \\
Symplectic Flow & $5.4_{\pm0.0}{\times}10^{-1}$ & $5.3_{\pm0.1}{\times}10^{-1}$ & $4.8_{\pm0.0}{\times}10^{-1}$ & $4.8_{\pm0.0}{\times}10^{-1}$ & $5.1_{\pm0.0}{\times}10^{-1}$ \\
MLP Flow & $2.0_{\pm0.4}{\times}10^{-2}$ & $4.6_{\pm0.1}{\times}10^{-2}$ & $2.3_{\pm0.0}{\times}10^{-2}$ & $2.0_{\pm0.1}{\times}10^{-2}$ & $2.7_{\pm0.1}{\times}10^{-2}$ \\
MLP Flow (soft) & $1.7_{\pm0.2}{\times}10^{-2}$ & $4.4_{\pm0.8}{\times}10^{-2}$ & $2.6_{\pm0.1}{\times}10^{-2}$ & $1.7_{\pm0.2}{\times}10^{-2}$ & $2.6_{\pm0.2}{\times}10^{-2}$ \\
DeepONet & $3.2_{\pm0.6}{\times}10^{-2}$ & $9.6_{\pm0.4}{\times}10^{-2}$ & $4.6_{\pm8.0}{\times}10^{6}$ & $4.0_{\pm0.5}{\times}10^{-2}$ & $1.2_{\pm2.0}{\times}10^{6}$ \\
Transolver & $1.4_{\pm0.1}{\times}10^{-2}$ & $4.2_{\pm0.5}{\times}10^{-2}$ & $1.9_{\pm0.1}{\times}10^{-2}$ & $1.4_{\pm0.1}{\times}10^{-2}$ & $2.2_{\pm0.2}{\times}10^{-2}$ \\
\midrule
\multicolumn{6}{l}{\textit{Ablations}} \\
\quad Fixed Allocation & $2.2_{\pm0.5}{\times}10^{-3}$ & $2.8_{\pm0.5}{\times}10^{-3}$ & $2.3_{\pm0.5}{\times}10^{-3}$ & $1.6_{\pm0.2}{\times}10^{-3}$ & $2.2_{\pm0.4}{\times}10^{-3}$ \\
\quad No Correction & $2.1_{\pm0.6}{\times}10^{-3}$ & $3.8_{\pm1.0}{\times}10^{-3}$ & $2.6_{\pm0.2}{\times}10^{-3}$ & $2.2_{\pm0.3}{\times}10^{-3}$ & $2.7_{\pm0.5}{\times}10^{-3}$ \\
\midrule
\textbf{CoSynFlow} & $\bm{1.3_{\pm0.1}{\times}10^{-3}}$ & $\bm{2.5_{\pm0.3}{\times}10^{-3}}$ & $\bm{2.1_{\pm0.3}{\times}10^{-3}}$ & $\bm{1.2_{\pm0.2}{\times}10^{-3}}$ & $\bm{1.8_{\pm0.1}{\times}10^{-3}}$ \\
\bottomrule
\end{tabular}

\end{table}

\subsection{Energy Error over the Long Horizon}

The energy errors of Table~\ref{tab:long_energy} are averages over
$t\in(0,10T]$. Figure~\ref{fig:energy_baseline} and
Figure~\ref{fig:energy_ablation} resolve them in time, under the same
protocol and at $\lvert\gamma\rvert=0.1$.

\begin{figure}[htbp]
    \centering
    \begin{subfigure}{0.45\linewidth}
        \includegraphics[width=\linewidth]{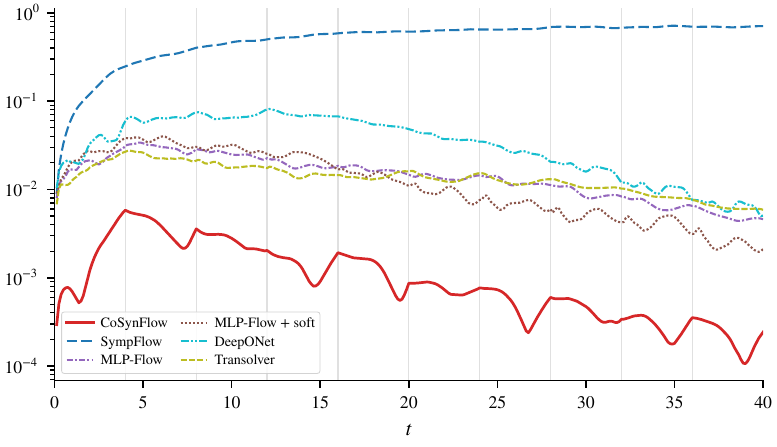}
        \caption{B1, anisotropic oscillator}
    \end{subfigure}
    \hfill
    \begin{subfigure}{0.45\linewidth}
        \includegraphics[width=\linewidth]{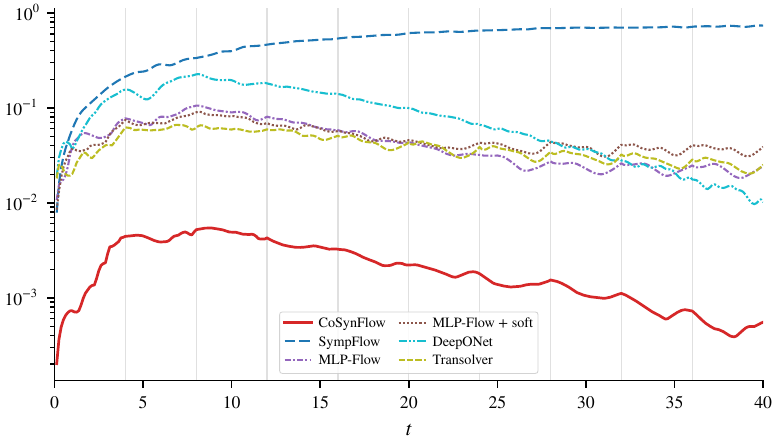}
        \caption{B2, coupled Duffing}
    \end{subfigure}

    \vspace{6pt}

    \begin{subfigure}{0.45\linewidth}
        \includegraphics[width=\linewidth]{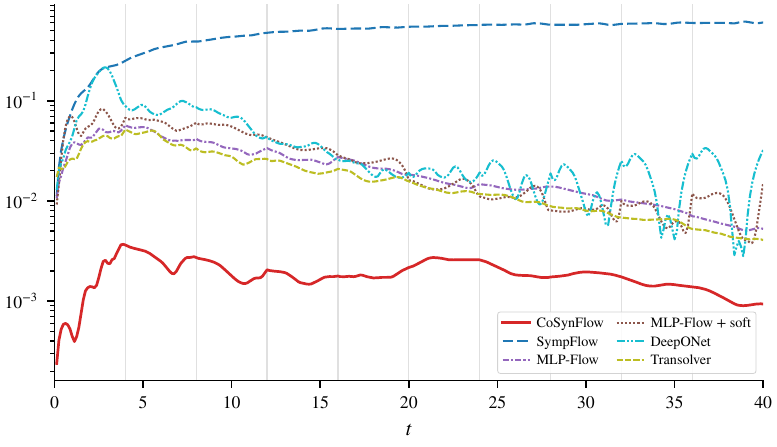}
        \caption{B3, Mexican hat}
    \end{subfigure}
    \hfill
    \begin{subfigure}{0.45\linewidth}
        \includegraphics[width=\linewidth]{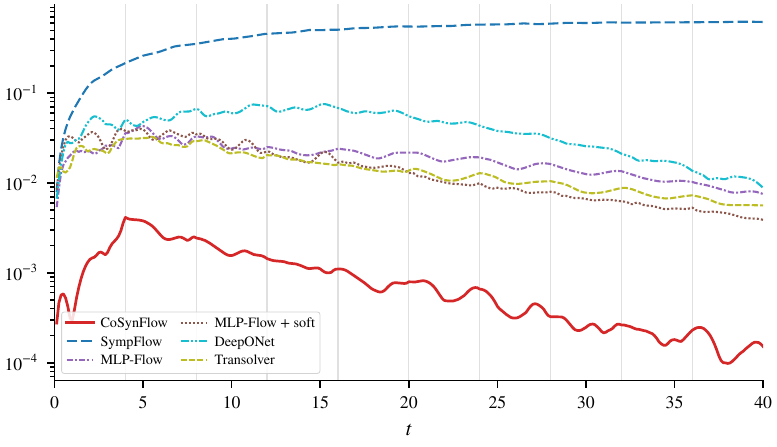}
        \caption{B4, quartic coupled}
    \end{subfigure}
    \caption{
    Energy error $\E_{H}(t)$ for CoSynFlow and the five baselines over
    $0\leq t\leq 10T$ at $\lvert\gamma\rvert=0.1$, averaged over $64$
    initial conditions, on a logarithmic scale.
    }
    \label{fig:energy_baseline}
\end{figure}

\begin{figure}[htbp]
    \centering
    \begin{subfigure}{0.45\linewidth}
        \includegraphics[width=\linewidth]{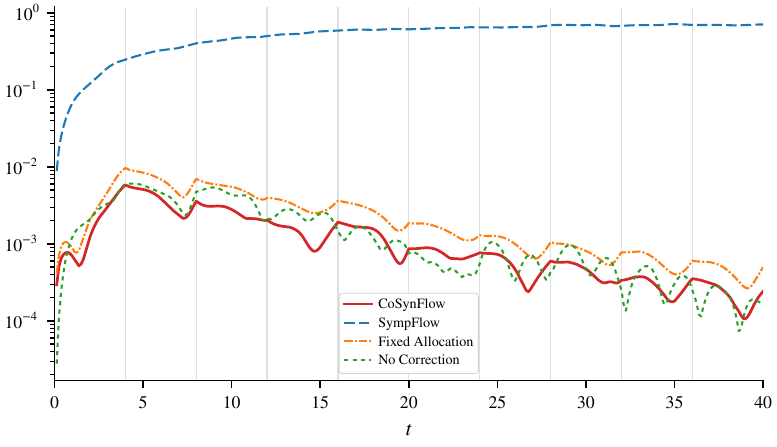}
        \caption{B1, anisotropic oscillator}
    \end{subfigure}
    \hfill
    \begin{subfigure}{0.45\linewidth}
        \includegraphics[width=\linewidth]{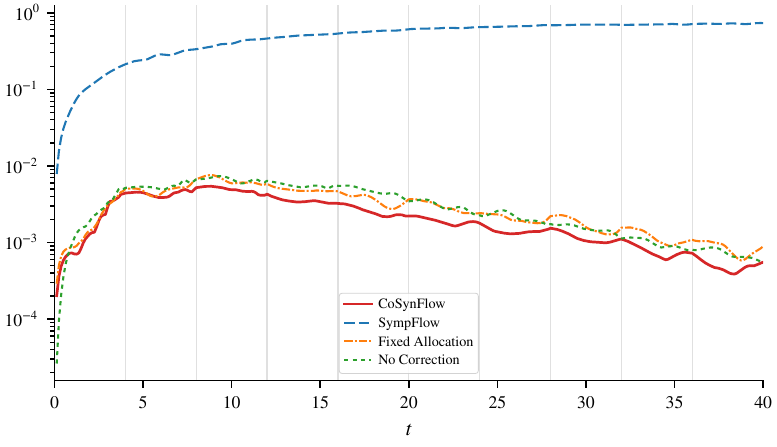}
        \caption{B2, coupled Duffing}
    \end{subfigure}

    \vspace{6pt}

    \begin{subfigure}{0.45\linewidth}
        \includegraphics[width=\linewidth]{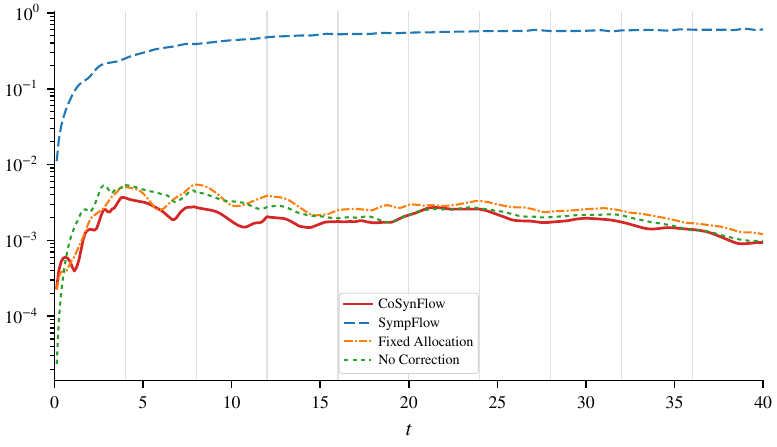}
        \caption{B3, Mexican hat}
    \end{subfigure}
    \hfill
    \begin{subfigure}{0.45\linewidth}
        \includegraphics[width=\linewidth]{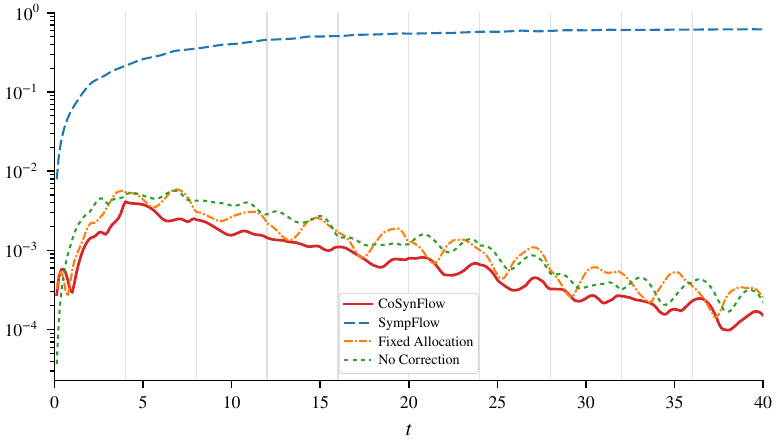}
        \caption{B4, quartic coupled}
    \end{subfigure}
    \caption{
    Energy error $\E_{H}(t)$ for CoSynFlow, Symplectic Flow and the two
    ablations, under the protocol of Figure~\ref{fig:energy_baseline}.
    }
    \label{fig:energy_ablation}
\end{figure}

In the position plane and in the time series of $q_1$, the three
structured models are visually indistinguishable over the whole horizon,
so those plots are not reproduced here for the ablation group. Their
differences appear only in the aggregate errors of
Table~\ref{tab:long_energy} and of the main paper.

\subsection{Structure Error under Composition}

The main paper reports the structure error at $t=2$ and after composing
the learned map sixteen times, a total time of $7.5T$.
Table~\ref{tab:compose} gives both.

\begin{table}[htbp]
\centering
\caption{
Conformal symplectic structure error, evaluated in double precision at
$\gamma=-0.3$.
}
\label{tab:compose}
\setlength{\tabcolsep}{6pt}
\small
\begin{tabular}{lcc}
\toprule
Model & $\E_\omega$ at $t=2$ & $\E_\omega$ after $16$ compositions \\
\midrule
\multicolumn{3}{l}{\textit{Baselines}} \\
Symplectic Flow & $8.2{\times}10^{-1}$ & $8.1{\times}10^{3}$ \\
MLP Flow        & $2.6{\times}10^{-1}$ & $6.2{\times}10^{1}$ \\
MLP Flow (soft) & $1.3{\times}10^{-1}$ & $2.9{\times}10^{1}$ \\
DeepONet        & $7.4{\times}10^{-1}$ & $3.9{\times}10^{2}$ \\
Transolver      & $2.2{\times}10^{-1}$ & $2.3{\times}10^{1}$ \\
\midrule
\multicolumn{3}{l}{\textit{Ablations}} \\
\quad Fixed Allocation & $1.6{\times}10^{-15}$ & $8.8{\times}10^{-14}$ \\
\quad No Correction    & $9.9{\times}10^{-16}$ & $5.9{\times}10^{-14}$ \\
\midrule
\textbf{CoSynFlow}     & $1.3{\times}10^{-15}$ & $1.2{\times}10^{-13}$ \\
\bottomrule
\end{tabular}

\end{table}

\subsection{Physics-Informed Training}

Table~\ref{tab:pi_full} gives the full results of the physics-informed
experiment. The model is CoSynFlow instantiated for the single system
B2 at $\lvert\gamma\rvert=0.2$, trained on two reference trajectories
for $4000$ steps with $\lambda=1$, with three seeds. Errors are measured
on $64$ held-out trajectories, within the trained horizon and after
composition to $10T$.

\begin{table}[htbp]
\centering
\caption{
Physics-informed training of CoSynFlow on a single system with two
training trajectories, three seeds. The last column is the ratio of the
two means.
}
\label{tab:pi_full}
\setlength{\tabcolsep}{5pt}
\small
\begin{tabular}{lccc}
\toprule
& $\mathcal{L}_{\mathrm{pred}}$ & $\mathcal{L}_{\mathrm{pred}}+\lambda\mathcal{L}_{\mathrm{PI}}$ & Ratio \\
\midrule
State error, $t\in(0,T]$   & $1.21_{\pm0.54}{\times}10^{-2}$ & $\bm{2.95_{\pm0.17}{\times}10^{-3}}$ & $4.09$ \\
State error, $t\in(0,10T]$ & $4.61_{\pm1.5\phantom{0}}{\times}10^{-2}$ & $\bm{9.88_{\pm0.55}{\times}10^{-3}}$ & $4.66$ \\
Energy error, $t\in(0,T]$  & $2.95_{\pm1.7\phantom{0}}{\times}10^{-3}$ & $\bm{4.94_{\pm0.41}{\times}10^{-4}}$ & $5.97$ \\
\midrule
Structure error            & $7.5{\times}10^{-16}$ & $6.9{\times}10^{-16}$ & --- \\
\bottomrule
\end{tabular}

\end{table}

For each of the three metrics, every seed trained with the residual term
achieves a lower error than every seed trained without it. The relative
standard deviation across seeds falls from $45\%$ to $5.8\%$ for the
state error within the trained horizon.

\bibliographystyle{unsrt}  

\bibliography{main}

\end{document}